\theoremstyle{plain}
\newtheorem{theorem}{Theorem}[section]
\newtheorem{lemma}[theorem]{Lemma}
\theoremstyle{definition}
\newtheorem{definition}[theorem]{Definition}
\theoremstyle{remark}
\DeclareMathOperator*{\argmax}{arg\,max}
\bfseries\color{blue},
\date{}
\title{ScoreFlow: Mastering LLM Agent Workflows via\\Score-based Preference Optimization}
\author{
  Yinjie Wang\textsuperscript{1,*}~~
  Ling Yang\textsuperscript{2,*}~~
  Guohao Li\textsuperscript{3}~~
  Mengdi Wang\textsuperscript{2}~~
  Bryon Aragam\textsuperscript{1}\\[6pt]
  Project: \url{https://github.com/Gen-Verse/ScoreFlow}
}
\begin{document}
\maketitle

\let\thefootnote\relax

\footnotetext{%
\textsuperscript{*}Equal contribution.
\textsuperscript{1}University of Chicago.\quad
\textsuperscript{2}Princeton University.\quad
\textsuperscript{3}University of Oxford.\quad
Correspondence to:
yangling0818@163.com,
yinjie@uchicago.edu.
}

\begin{abstract}
Recent research has leveraged large language model multi-agent systems for complex problem-solving while trying to reduce the manual effort required to build them, driving the development of automated agent workflow optimization methods. However, existing methods remain inflexible due to representational limitations, a lack of adaptability, and poor scalability when relying on discrete optimization techniques. We address these challenges with ScoreFlow, a simple yet high-performance framework that leverages efficient gradient-based optimization in a continuous space. ScoreFlow incorporates Score-DPO, a novel variant of the direct preference optimization method that accounts for quantitative feedback. Across six benchmarks spanning question answering, coding, and mathematical reasoning, ScoreFlow achieves an 8.2\% improvement over existing baselines. Moreover, it empowers smaller models to outperform larger ones with lower inference costs.
\end{abstract}

\section{Introduction}

Large language models (LLMs) have demonstrated proficiency in solving natural language tasks \citep{ouyang2022training, touvron2023llama, achiam2023gpt, anil2023palm, yang2024buffer, yang2024supercorrect}. Furthermore, the multiagent system (workflow) of LLMs, where multiple agents coordinate and exchange information to complete tasks, enables LLM-based agents to collaborate and solve complex tasks across a wide range of domains, such as mathematical problem solving \citep{zhong2024achieving, xu2023lemur}, question answering \citep{nori2023can}, and coding tasks \citep{hong2024data, ridnik2024code}.

These manually designed agentic workflows, however, require significant effort and have limited capacity to handle tasks across diverse domains. Therefore, the emerging focus in this area is to address the limitations of static workflows by developing automated methods for workflow generation and optimization. These optimizations can target various aspects, including prompt refinement, hyperparameter tuning, and workflow structure design \citep{khattab2024dspy, zhuge2023mindstorms, yuksekgonul2024textgrad, hu2024automated, zhang2024aflow, chen2023autoagents, li2024autoflow, liu2024dynamic, song2024adaptive, zhang2024g}.

The automated optimization methods can be constrained by the limitations inherent in pre-defined workflow structures and the rigidity of workflow space representations \citep{khattab2024dspy, zhuge2023mindstorms, yuksekgonul2024textgrad, liu2024dynamic}.
DyLAN \citep{liu2024dynamic} thoughtfully emphasizes the communication structure within LLM debates but overlooks other potential communication structures. GPTSwarm \citep{zhuge2023mindstorms} leverages graph-based structures and employs reinforcement fine-tuning for optimization. However, the lack of consideration for conditional states within the graph structure imposes restrictions on the search space.

To improve representation capabilities, AFlow \citep{zhang2024aflow} and ADAS \citep{hu2024automated} employ code as representation for workflow, facilitating robust and flexible workflow searches. However, ADAS faces challenges with inefficient search processes and coarse workflow storage, which leads to the accumulation of irrelevant data and increased complexity, ultimately reducing its effectiveness. To address these issues, AFlow employs a variant of the Monte Carlo Tree Search as an optimization method to enhance efficiency. However, the overly rapid convergence on workflow structures, combined with the discrete optimization method, restricts the exploration of the search space, often leading to suboptimal outcomes. Additionally, they all optimize a single workflow for the entire task set, which limits adaptability and scalability for larger datasets containing diverse problems \citep{zhang2024g, song2024adaptive}.

To address these challenges, we propose \textbf{ScoreFlow}, an automated and cost-efficient multi-agent workflow generation framework that employs a novel optimization method to achieve high performance, scalability, and adaptability. For each given task, the workflow generator constructs its workflow using code as a representation and the generator is further optimized based on evaluation scores feedback. The loss-gradient optimization makes it more flexible and scalable than previous discrete optimization methods \citep{zhang2024aflow, hu2024automated, liu2024dynamic}. Furthermore, by leveraging an open-source LLM as the foundational model for workflow generation, our framework minimizes the costs associated with workflow generation. This approach addresses the challenge of high API call expenses inherent in the workflow generation process \citep{hu2024automated, song2024adaptive}.

In the optimization process, we collect preference pairs from evaluation scores to construct preference data, which are subsequently used to fine-tune the workflow generator via a novel variant of direct preference optimization (DPO) \citep{rafailov2024direct}. While DPO is efficient and stable, variance and inaccuracies in evaluation scores reduce the reliability of preference data, slowing convergence and hindering optimal performance within limited iterations. To address these limitations, we propose a widely applicable preference optimization method, \textbf{Score-DPO}, which incorporates quantitative score information directly into the optimization process.

We highlight our following contributions:

\setlength{\parskip}{0pt}
\begin{itemize}
\setlength{\itemsep}{1pt} %
    \setlength{\parskip}{1pt} %
    \item \textbf{ScoreFlow:} We introduce ScoreFlow, a simple yet flexible, automated, and adaptive framework for agentic workflow generation and optimization, minimizing the need for human intervention.
    \item \textbf{Score-DPO:} We propose Score-DPO, an optimization method that can be broadly applied in similar settings, leveraging quantitative evaluation feedback rather than relying solely on preference pairs by integrating evaluation scores into the preference optimization process. Its effectiveness is demonstrated through both experimental results and theoretical analysis.
    \item \textbf{Extensive Evaluations:} We evaluate ScoreFlow with Score-DPO on six benchmark datasets across three diverse tasks: question answering, coding, and mathematical reasoning. Our approach outperforms baseline methods by 8.2\%. Extensive studies further highlight the robustness, scalability, and cost-efficiency of ScoreFlow across different models and reveal its ability to enable smaller models to surpass larger models in performance while achieving greater cost efficiency.
\end{itemize}

\section{Related Work}

\subsection{Agentic Workflow Optimization}

\paragraph{Automated Optimizations for Prompt and Hyperparameter} 
 
Automated optimization methods emphasizing prompt optimization \citep{fernando2023promptbreeder, yuksekgonul2024textgrad, yang2023large, khattab2024dspy} or hyperparameter optimization \citep{saad2024archon} can enhance performance; however, they impose limitations on the workflow structure and often require manual modifications to accommodate new tasks, restricting their adaptability and scalability.

\paragraph{Automated Optimizations for Workflow Structure} Workflow optimization methods \citep{zhou2024symbolic, zhuge2023mindstorms, hu2024automated, zhang2024aflow, chen2023autoagents, li2024autoflow, liu2024dynamic, song2024adaptive, zhang2024g} focus on refining the structure of workflows, making them more robust for handling diverse tasks.
However, the inflexibility and limitations in workflow representation, such as the loss of conditional states within the graph structure, may restrict the search space and consequently hinder the ability to accommodate diverse and complex workflows.
To address this challenge, ADAS \citep{hu2024automated} and Aflow \citep{zhang2024aflow} adopt code as a representation for workflows.
However, the performance of ADAS is constrained by its accumulated irrelevant information and increased complexity in optimization, hindering agents' ability. Aflow employs a Monte Carlo Tree Search-based method to efficiently identify optimal workflows; however, its tendency toward premature convergence on workflow structures limits the exploration of the search space. Moreover, the discrete optimization method, which involves randomly selecting failed cases and feeding them back to the optimizer LLM to refine the workflow, imposes significant limitations on scalability.

\subsection{Learning from Preferences for Language Models}

\paragraph{PPO} Proximal Policy Optimization (PPO) \citep{schulman2017proximal} process preference feedback in two stages. First, a reward model \( R_{\phi} \) is trained on the preference dataset \( D_{R} \), where each entry \( (x, y_w, y_l) \) consists of a prompt \( x \), a preferred response \( y_w \), and a rejected response \( y_l \). The reward model is optimized by minimizing the following loss function, which is inspired by the Bradley-Terry (BT) model \citep{bradley1952rank} for pairwise ranking:
\begin{align}
\label{pporewardobj}
- \mathbb{E}_{(x, y_w, y_l) \sim D_{R}} \big[\log \sigma \big(R_{\phi}(x, y_w) - R_{\phi}(x, y_l)\big)\big].
\end{align}
Next, the policy model \( \pi_{\theta} \) is refined by maximizing the reward assigned to its generated responses, while maintaining a soft KL divergence constraint to prevent degeneration. The objective is expressed as:
\begin{align}
\label{ppopolicyobj}
\mathbb{E}_{x \sim D_{\pi}, y \sim \pi_{\theta}(y \mid x)} \big[R_{\phi}(x, y)\big] - \beta \mathbb{D}_{KL} (\pi_{\theta} || \pi_{ref}),
\end{align}
where \( \pi_{ref} \) represents the reference policy, and \( \beta \) is a hyperparameter controlling the KL penalty.

\paragraph{DPO} Direct Preference Optimization (DPO) \citep{rafailov2024direct} facilitates direct policy optimization using preference data, eliminating the need for explicit reward models or active policy sampling. This approach enhances both the efficiency and stability of the optimization process. From the closed-form solution of Equation~\ref{ppopolicyobj}, the implicit reward can be expressed as \( R_{\phi}(x, y) =  \beta \log \big(\pi_{\theta^{\star}}(y \mid x) / \pi_{ref}(y \mid x)\big) + \beta Z(x)\), where \( \pi_{\theta^{\star}} \) is the optimal policy and $Z(x)$ is a partition function. The policy model can then be directly optimized using the reward objective in Equation~\ref{pporewardobj}, resulting in the DPO loss:
\[
- \mathbb{E}_{(x, y_w, y_l) \sim D_{R}} \big[\log \sigma \big(r(x, y_w) - r(x, y_l)\big)\big],
\]
where $r(x, y) := \beta \log \big(\pi_{\theta^{\star}}(y \mid x) / \pi_{ref}(y \mid x)\big)$.

When the data format includes associated evaluation scores for each sample (as in our setting), rather than solely chosen and rejected pairs, we propose Score-DPO, which integrates these scores into the training process to enhance performance. This approach achieves improved performance over standard DPO while maintaining its efficiency and stability in our applications.

\begin{figure*}[tp]
    \centering
    \includegraphics[width=\textwidth]{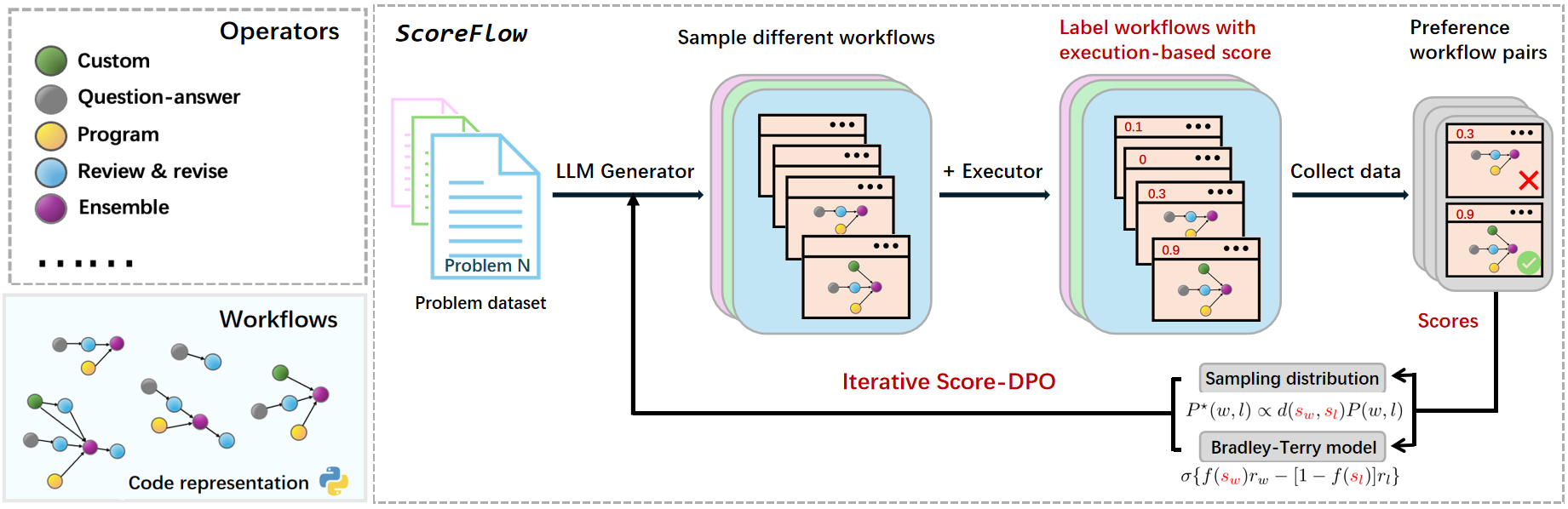}
    \caption{\textbf{Pipeline of ScoreFlow.} First, for each problem in the dataset, multiple workflows are generated. Next, an executor is employed to execute these workflows for corresponding problems, resulting in evaluation scores. Based on these scores, preference data is collected. Subsequently, incorporating the score information, the Score-DPO algorithm is used to fine-tune the generator. This process is iterated until the maximum number of iterations is reached or convergence is achieved. }
    \label{pipline}
\vspace{-3mm}
\end{figure*}

\section{ScoreFlow}

\subsection{Background}

We provide a preliminary overview of ScoreFlow's inference process, as illustrated in Figure~\ref{figspecificeample}. Given Math tasks A and B, along with selectable agent types—programmer, customizable operator, ensemble operator, and reviewer—a Python-based workflow is generated for each task, where the agent sets of workflows A and B contain one and five agents, respectively. Each task is then input into its respective workflow to produce the executed result.

Now we formalize the LLM multi-agent workflow optimization problem and some notations as follows. Given an input task \( q \), formatted as a prompt, we want to determine the optimal workflow \( G(q) \) to solve this task, where \( G \) is the workflow generator. A workflow function \( W_f \) is defined as a mapping that maps the integration of some task \( q \) and the agent set \( V \), \((q, V)\), to executed results \( W_f(q, V) \), typically the solution to the task. The agent set \( V \) consists of a collection of agents, each characterized by their system prompts, temperature settings, and other relevant parameters. Then, the \textbf{workflow} is defined as the combination of an agent set and a workflow function: \( (V, W_f) \).
We define the \textbf{workflow search space} as: $\mathcal{W} = \{ (V, W_f) \mid V \subset \mathcal{V}, (V, W_f) \text{ satisfies the condition } C \}$, where \( \mathcal{V} \) represents the whole agent space. The condition \( C \) imposes constraints on the search space, such that \( W_f \) is executable for the agent set \( V \).
Given these notations, our optimization objective is to identify the optimal workflow generator:
\begin{align*}
G^{\star} = \argmax_{G: \operatorname{Im}(G) \subset \mathcal{W}} \mathbb{E}_{q \in D} \big[ S(q, G(q)) \big],
\end{align*}
where \( D \) represents the dataset of tasks, and \( S \) is a third-party evaluator for the result generated by executing the workflow \( G(q) \) on task \( q \), such as a human-provided score, the average win rate, or other relevant metrics.

Using code as a representation of the workflow function \( W_f \) \citep{hu2024automated, zhang2024aflow} can account for linear sequences, loops, conditional logic, and provide flexibility that exceeds graph or network structures. Furthermore, following Aflow \citep{zhang2024aflow}, we characterize agents in \( \mathcal{V} \) as operators. The operators are predefined, reusable combinations of agent nodes representing common operations, such as programmers, reviewers, revisers, question-answering operators, ensemble operators, test operators and customizable operators, etc. By allowing the system prompts within operators to be customizable by the generator \( G \), we achieve optimization for the prompt, expand the operator space \( \mathcal{V} \), and enrich the search space \( \mathcal{W} \).

To make the workflow adaptive for the input task \( q \), that is, to adapt the chosen operators and the structural complexity of the generated workflow according to the input problem, it is necessary to extract semantic information from \( q \). Specifically, we use an open-source pre-trained large language model as the base model for our generator \( G \). The input to the generator consists of the combination of the task \( q \) and guidance on generation, including format requirements and introductions to available operators, all formatted as a guidance prompt. The detailed guidance prompt is provided in the Appendix~\ref{appdetailedscoreflowmethods}.

\begin{figure*}[t]
    \centering
    \begin{subfigure}[t]{0.45\textwidth}
        \centering
        \includegraphics[height=5.5cm]{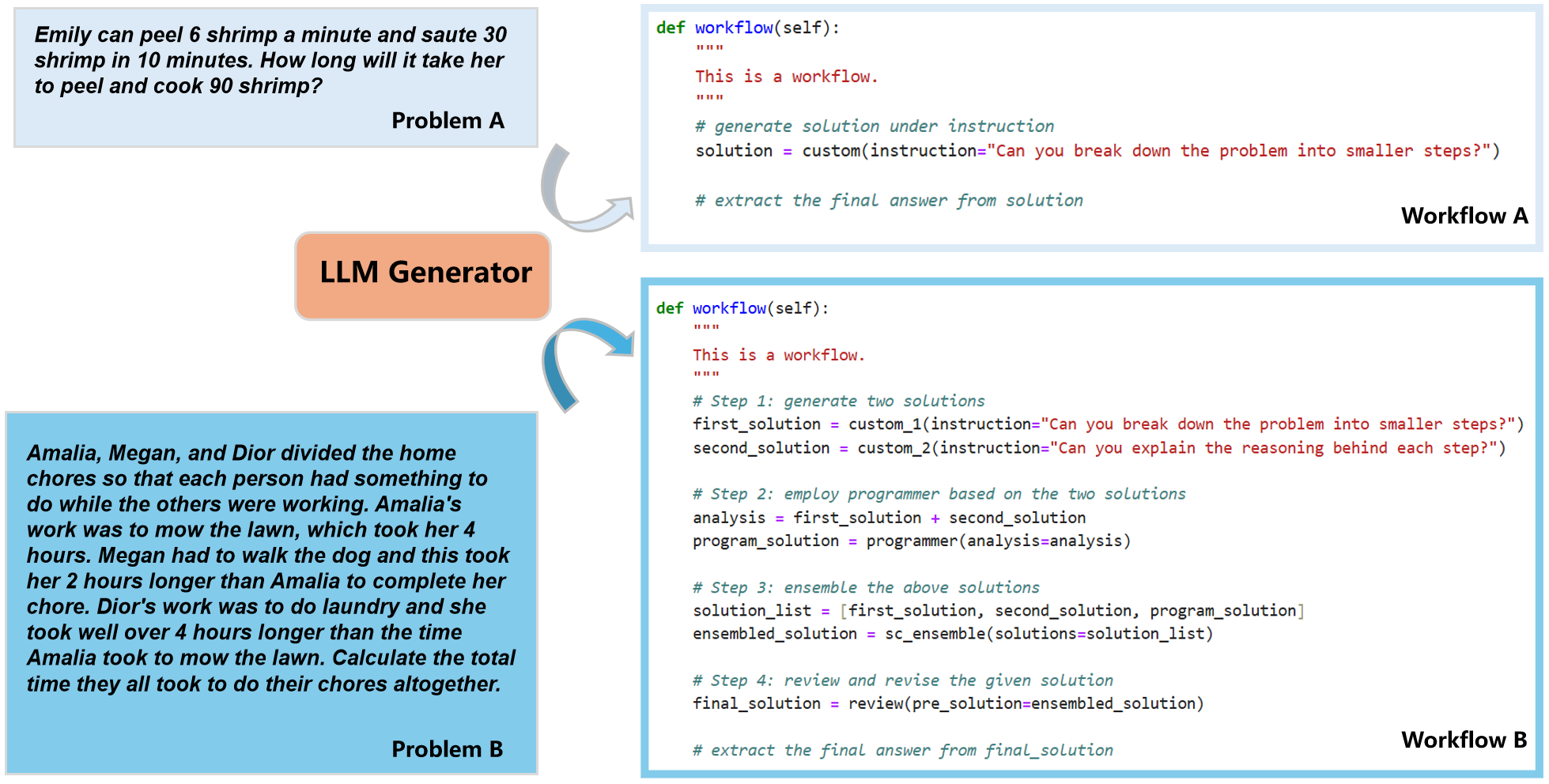}
    \end{subfigure}
    \hfill
    \begin{subfigure}[t]{0.32\textwidth}
        \centering
        \includegraphics[height=5.5cm]{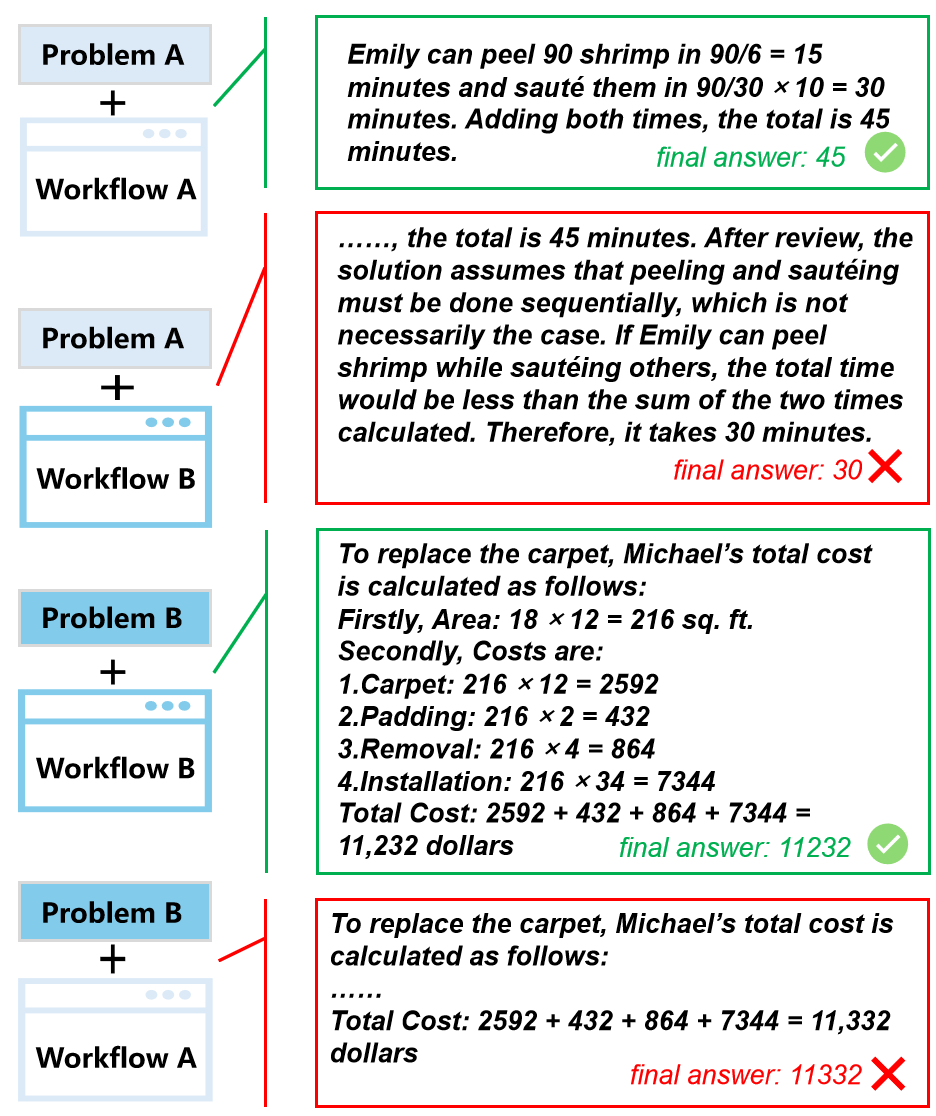}
    \end{subfigure}
    \caption{Illustration of the inference process: Two distinct workflows are generated for two GSM8K problems, and their executed results are evaluated. The executor utilized is GPT-4o-mini, with a temperature of 0. This plot highlights the adaptivity of the generation process.}
    \label{figspecificeample}
\vspace{-5mm}
\end{figure*}

\subsection{ScoreFlow Overview}
\label{secscoreflow}

In this section, we provide a high-level overview of our proposed method, \textbf{ScoreFlow}, as illustrated in Figure~\ref{pipline}, while deferring the detailed procedural steps to Sections~\ref{secqlopw} and \ref{secscore-dpo}. At each iteration, we start by collecting preference data. For each task, we generate multiple workflows using the generator $G$, evaluate the execution results to obtain evaluation scores, and derive preference pairs based on these scores. To optimize the generator \(G\) using this preference dataset, we propose \textbf{Score-DPO}, an enhanced version of Direct Preference Optimization (DPO) \citep{rafailov2024direct}. The generator is fine-tuned on preference dataset, and the updated generator is employed in the subsequent iteration.
The iterative process stops when it achieves convergence or reaches the maximum iteration number \( M \).

\subsection{Quantitative Labeling of Preference Workflows}
\label{secqlopw}

In this section, we explain the process of assigning quantitative labels and collecting preference workflow data.
We generate \(k\) workflows for each task \(q\), denoted as \(g_i(q)\), where \(1 \leq i \leq k\). Each workflow \( g_i(q) \) produces execution results for the corresponding \( q \), which are subsequently evaluated to derive an associated evaluation score, denoted as \( s_i \), where \( s_i \in [0,1] \).
The evaluation score is derived by using an independent executor LLM to execute the workflow, calculating the average F1 score or win rate from their outputs in our experiments. Unlike self-improvement methods, which rely on the generator for evaluation \citep{jiang2024self}, thereby making the iteration self-referential, our approach uses third-party sources (e.g., validation datasets and executor LLM).
Next, we construct preference pairs for task problem \(q\) in the form \(D_q = \{(q, g_i(q)), (q, g_j(q)) ) \mid s_i > s_j\}\), which are then aggregated to form the complete preference dataset, \(D_{pre} = \bigcup_{q \in D} D_q\). For simplicity, we denote each element in \( D_{pre} \) as \((w, l)\), where the winner \( w \) includes the prompt \( x \), chosen workflow \( y_w \), and evaluation score \( s_w \), while the loser \( l \) includes \( x \), rejected workflow \( y_l \), and score \( s_l \).

\subsection{Optimization via Score-DPO}
\label{secscore-dpo}

We observe that directly using DPO to finetune the generator on collected preference data results in slow convergence and an inability to achieve optimal performance. These issues are due to errors and variance in the evaluation scores. We propose a widely applicable optimization method Score-DPO, a refined version of DPO designed to address these challenges. Our experiments demonstrate the superiority of Score-DPO in optimizing the LLM workflow generator, suggesting its suitability for similar settings. We elaborate on the two improvements of Score-DPO as follows.

\paragraph{Enhanced Sampling Distribution} The slow convergence and suboptimal performance observed when applying DPO in our setting can be attributed to inaccuracies in the collected preference data, caused by the unavoidable variance and error in evaluation scores. To address this, we propose up-weighting sample pairs \((w, l)\) with larger score differences \(s_w - s_l\). Specifically, we introduce a function \(d(x, y): [0, 1]^2 \rightarrow [0, 1]\) that is strictly monotonically increasing with respect to \(x - y\). 
We then up-weight the sampling probability of data pairs with larger score differences by increasing their likelihood according to \(P^{\star}(w, l) \propto d(s_w, s_l) P(w, l)\), where \(P(w, l)\) represents the uniform random sampling distribution over the preference dataset \(D_{pre}\). This adjustment ensures that pairs with greater score differences are prioritized during sampling, enhancing the effectiveness of the optimization process.

\paragraph{Incorporate Evaluation Scores into the Ranking Objective} There have been some alternative formulations for the Bradley-Terry (BT) \citep{bradley1952rank} ranking objective $\sigma (r_w - r_l)$ that are more effective than DPO \citep{meng2024simpo, azar2024general, park2024disentangling}, where $r_w := \beta \log(\pi_{\theta}(y_w | x)/ \pi_{ref}(y_w | x) )$ and $r_l := \beta \log(\pi_{\theta}(y_l | x)/ \pi_{ref}(y_l | x) )$. In our setting, we incorporate the evaluation score to guide the implicit reward. Specifically, we define the score-based BT ranking objective as $\sigma (r_w^{\star} - r_l^{\star})$, where \(r_w^{\star} := f(s_w) r_w\), \(r_l^{\star} := (1 - f(s_l)) r_l\), and \(f(x): [0, 1] \rightarrow [0, 1]\) is a strictly monotonically increasing function. 
Empirically, this approach ensures that data points with more deterministic evaluation scores have a greater influence on the loss function. Finally, we have the loss function of Score-DPO as 
\begin{align*}
\mathcal{L}_{\text{Score-DPO}} = - \mathbb{E}_{(w, l) \sim P^{\star}} [ \log \sigma (r^{\star}_{w} - r^{\star}_{l}) ].
\end{align*}

\subsection{Analysis of Score-DPO}

While DPO is known to struggle with effectively learning preference rankings \citep{chen2024preference}, the following theorem will demonstrate that this score-guided approach aligns the influence of each sample on the optimization objective with the magnitude of its evaluation scores.

To formalize our analysis, we introduce notation to quantify the influence of each specific sample on the optimization objective.

\begin{definition}[\textit{per-sample influence}]
For a given sample \( z \), the influence of \( z \) on the objective function, referred to as the \textit{per-sample influence}, is defined as:
\begin{align*}
I(z) = \frac{\partial}{\partial r_z} \mathbb{E}_{(w, l) \sim P^{\star}} \left[ \log \sigma \big(r^{\star}_{w} - r^{\star}_{l}\big) \cdot \mathds{1}_{z \in \{w, l\}} \right].
\end{align*}
\end{definition}

The per-sample influence \( I(z) \), which is the gradient contributed by sample \( z \), represents the quantitative impact of \( z \) on the optimization objective. When \(I(z) > 0\), the optimization process increases the logits of \(z\), making it more likely to be preferred. When \(I(z) < 0\), it decreases the logits of \(z\), making it less likely to be preferred. The following Theorem~\ref{deriscoreDPO} demonstrates the effect of score-guidance on \(I(z)\).

\begin{theorem}
\label{deriscoreDPO}
Let function \(d(x, y): [0, 1]^2 \rightarrow [0, 1]\) be strictly monotonically increasing with respect to \(x - y\), and function \(f(x): [0, 1] \rightarrow [0, 1]\) be strictly monotonically increasing in \(x\). The per-sample influence for a sample \(z\) is given by:
\setlength{\abovedisplayskip}{4pt}
\setlength{\belowdisplayskip}{4pt}
\begin{align*}
I(z) =  \mathbb{E}_{(w, l) \sim P} [ d(s_w, s_l) \sigma (r^{\star}_{l} - r^{\star}_{w}) \big( f(s_w) \mathds{1}_{w = z}  - (1 - f(s_l)) \mathds{1}_{l = z} \big) ],
\end{align*}
which is strictly monotonically increasing with the score \(s_z\) when $ -(1 - f(s_z))^{-1}\le r_z \le f^{-1}(s_z) $ holds.
\end{theorem}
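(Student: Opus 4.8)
The plan is to compute $I(z)$ directly from the definition and then analyze its monotonicity in $s_z$. First I would substitute the reweighted sampling distribution: since $P^{\star}(w,l) \propto d(s_w, s_l) P(w,l)$, we can write $\mathbb{E}_{(w,l)\sim P^{\star}}[\,\cdot\,] = \mathbb{E}_{(w,l)\sim P}[d(s_w,s_l)\,\cdot\,]$ up to the normalizing constant (which I expect the authors to absorb or treat as fixed, so I would note this explicitly). Then I would differentiate $\log\sigma(r^{\star}_w - r^{\star}_l)\cdot\mathds{1}_{z\in\{w,l\}}$ with respect to $r_z$. Using $\frac{d}{dt}\log\sigma(t) = \sigma(-t)$ and the chain rule, together with $\frac{\partial r^{\star}_w}{\partial r_z} = f(s_w)\mathds{1}_{w=z}$ and $\frac{\partial r^{\star}_l}{\partial r_z} = (1-f(s_l))\mathds{1}_{l=z}$, the inner derivative becomes $\sigma(r^{\star}_l - r^{\star}_w)\big(f(s_w)\mathds{1}_{w=z} - (1-f(s_l))\mathds{1}_{l=z}\big)$. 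Combining with the $d(s_w,s_l)$ weight recovers the stated formula for $I(z)$.

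For the monotonicity claim, I would split the expectation into the two events $\{w=z\}$ and $\{l=z\}$ (these are disjoint for a fixed sample $z$, since $z$ is either the winner or loser of a given pair). On the event $\{w=z\}$, the integrand contributes $+d(s_z, s_l)\sigma(r^{\star}_l - r^{\star}_z)f(s_z)$; on the event $\{l=z\}$ it contributes $-d(s_w, s_z)\sigma(r^{\star}_z - r^{\star}_w)(1-f(s_z))$. I would then differentiate each piece with respect to $s_z$. The key observation is that $r^{\star}_z$ depends on $s_z$ (through $r^{\star}_z = f(s_z)r_z$ when $z=w$, or $r^{\star}_z = (1-f(s_z))r_z$ when $z=l$), so the $\sigma$ factors also vary with $s_z$; this is where the sign condition $-(1-f(s_z))^{-1}\le r_z \le f^{-1}(s_z)$ enters. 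I would show that $\frac{\partial}{\partial s_z}\big[d(s_z,s_l)\sigma(r^{\star}_l - f(s_z)r_z)f(s_z)\big] > 0$ whenever $r_z \le f^{-1}(s_z)$ (so that the derivative of the $\sigma f$ product stays nonnegative, using $f' > 0$, $d$ increasing in its first argument, and the argument of $\sigma$ not decreasing too fast), and symmetrically that the $\{l=z\}$ term's contribution to $I(z)$ is also increasing in $s_z$ whenever $r_z \ge -(1-f(s_z))^{-1}$ (here increasing $s_z$ decreases $1-f(s_z)$, which makes the $-(1-f(s_z))$ coefficient larger, i.e. less negative, hence $I(z)$ larger; one also checks the $\sigma$ factor cooperates under the lower bound on $r_z$). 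Summing the two monotone pieces gives the result.

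The main obstacle will be handling the coupling between $r^{\star}_z$ and $s_z$ cleanly: a naive computation treats $r_z$ as fixed but $r^{\star}_z = f(s_z)r_z$ moves with $s_z$, so the derivative of each summand has several terms of competing signs, and the sign condition on $r_z$ is exactly what is needed to guarantee the dominant term wins. I would isolate, for the $\{w=z\}$ term, the product $h(s_z) := f(s_z)\,\sigma(c - f(s_z)r_z)$ for a constant $c = r^{\star}_l$, compute $h'(s_z) = f'(s_z)\big[\sigma(c-f(s_z)r_z) - f(s_z)r_z\,\sigma(c-f(s_z)r_z)\sigma(f(s_z)r_z - c)\big]$, and show $h'(s_z)\ge 0$ reduces to $f(s_z)r_z \le \sigma(f(s_z)r_z-c)^{-1}$, which (using $\sigma(\cdot)^{-1}\ge 1$ and the bound $f(s_z)r_z \le f(s_z)f^{-1}(s_z)$; here one would want $f(s_z)f^{-1}(s_z)\le 1$, or more carefully $r_z\le f^{-1}(s_z)$ giving $f(s_z)r_z\le s_z\le 1$) follows. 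A symmetric argument with $\tilde h(s_z) := (1-f(s_z))\,\sigma(f(s_z)r_z \text{-ish} )$ handles the loser case under $r_z \ge -(1-f(s_z))^{-1}$. Finally, since the expectation over $P$ of pointwise-increasing (in $s_z$) integrands is increasing, and the $d$-weights are themselves monotone in the relevant score argument, $I(z)$ is strictly increasing in $s_z$ under the stated constraint; strictness follows because $f$, $d$ are strict and $\sigma > 0$.
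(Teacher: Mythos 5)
Your proposal is correct and takes essentially the same route as the paper: the same reweighting-plus-chain-rule derivation of $I(z)$, the same winner/loser split, and the same key monotonicity step --- your direct differentiation of $h(s_z) = f(s_z)\,\sigma(c - f(s_z)r_z)$ is precisely the paper's Lemma~\ref{derigxsig} (that $x\sigma(-ax+b)$ increases when $ax \le 1$) composed with the increasing map $s_z \mapsto f(s_z)$, applied once to the winner term and once, with $x = 1-f(s_z)$, to the loser term. One minor slip: the aside ``$f(s_z)r_z \le s_z$'' conflates the product $f(s_z)\cdot f^{-1}(s_z)$ with a composition, but the bound you actually need, $f(s_z)r_z \le 1$, follows immediately from $r_z \le f^{-1}(s_z)$.
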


Therefore, Score-DPO can incorporate score information into self-sampling preference optimization, enabling the optimization process to account for quantitative information, instead of only using the bare preference pairs information, and can reduce the error and variance caused by inaccuracies in the score.
Note that the condition stated in Theorem~\ref{deriscoreDPO} is not restrictive, as \( |r_z| \leq 1 \) provides a sufficient condition for its validity. Furthermore, our experimental results (in Appendix~\ref{appevfcit}) indicate that \( |r_z| \leq 1 \) holds with an approximate probability of \( 91.1\% \) during the optimization process prior to convergence.

\section{Experiments}

\subsection{Experimental Setup}

\paragraph{Datasets}
We focus on six public datasets, covering a range of tasks, including math problems, question-answering problems, and coding problems. Specifically, we utilize the full datasets for HumanEval \citep{chen2021evaluating} and MBPP \citep{austin2021program}. Following the approach of Aflow \citep{zhang2024aflow}, for GSM8K \citep{cobbe2021training}, we use the 1,319 data points in the test set. 
For the MATH dataset, to emphasize advanced and challenging problems, we select problems with a difficulty level of 5 from the following problem types: Combinatorics and Probability, Number Theory, Pre-algebra, and Pre-calculus, as done by \citet{hong2024data}. For DROP \citep{dua2019drop} and HotpotQA \citep{yang2018hotpotqa}, we follow the methodology outlined in \citet{hu2024automated}, \citet{shinn2024reflexion}, and \citet{zhang2024aflow}, randomly selecting 1,000 samples from each dataset. We split the data into validaton and test set using a 1:4 ratio.

\paragraph{Baselines}
The manually designed static workflow baselines include: direct LLM invocation, Chain of Thought \citep{wei2022chain}, Self-Consistency CoT (generate 5 responses to ensemble) \citep{wang2022self}, MedPrompt (3 responses and 5 votes) \citep{nori2023can}, MultiPersona Debate \citep{wang2023unleashing}, and Self-Refine (2 rounds) \citep{madaan2024self}. 
We also compare with code-representational automated workflow optimization methods: ADAS \citep{hu2024automated} and Aflow \citep{zhang2024aflow}, where we use GPT-4o-mini as their optimization model. We set the iteration rounds of Aflow to 20, as specified by \citet{zhang2024aflow}.

\paragraph{Models}
By default, we use Llama-3.1-8B-Instruct as the base model for our generator (inference performed using vLLM \citep{kwon2023efficient}), and GPT-4o-mini as the executor (inference via API, with a temperature of 0). 
In the ablation studies, we use Qwen2.5-7B-Instruct \citep{yang2024qwen2} as the generator and employ GPT-4o and DeepSeek series models \citep{deepseekai2024deepseekv3technicalreport} as the executors. All experiments used 2 A6000 GPUs using LoRA \citep{hu2021lora}.

\paragraph{Metrics and Evaluation Scores}
We report the solve rates (evaluated 3 times and averaged) in our final results. We use GPT-4o-mini as the judge model for MATH, DROP, and HotpotQA to avoid format inconsistency issues\footnote{When format inconsistencies arise, we use a judge model to resolve them (e.g., $0.1$ should equal to $10\%$).}. 
In each iteration of our optimization process (total 3 iterations), we generate $k = 8$ workflows for each problem and obtain their evaluation scores, where we do not use the judge model to reduce cost and computational overhead. Specifically, we use the F1 score as the evaluation metric for DROP and HotpotQA, and solve rates for the remaining datasets (evaluated 3 times and averaged). To apply Score-DPO, we set \( f(x) = x \) and \( d(x, y) = (x - y)^3 \) as the default choices. An ablation study on the selected functions is provided in Appendix~\ref{appalbd}.

\begin{table*}[t]
\scriptsize
\renewcommand\tabcolsep{3.2pt}
\renewcommand\arraystretch{1.2}
\small
\setlength{\abovecaptionskip}{0.1cm}
\setlength{\belowcaptionskip}{-0.2cm}
\centering
\caption{Comparison of performance between manually designed workflow methods and automated optimization workflow methods. All methods are executed using GPT-4o-mini, with each tested three times, and the average results reported.
}
\label{tab:mainresult}
\begin{tabular}{l|cc|cc|cc|c}
\specialrule{1.2pt}{0pt}{0pt} 
\multirow{2}{*}{\textbf{Method}} & \multicolumn{2}{c|}{\textbf{Question Answering}} & \multicolumn{2}{c|}{\textbf{Coding}} & \multicolumn{2}{c|}{\textbf{Math Reasoning}} & \multirow{2}{*}{\textbf{Average}} \\ 
& \textbf{HotpotQA} & \textbf{DROP} & \textbf{HumanEval} & \textbf{MBPP} & \textbf{GSM8K} & \textbf{MATH} & \\
\hline
\hline
IO 
& 73.6           & 81.6           & 90.1            & 69.5           & 89.1           & 52.2           & 76.0 \\

CoT \citep{wei2022chain} 
& 73.4           & 83.2           & 91.6            & 70.4           & 88.3           & 53.4           & 76.7 \\

CoT SC \citep{wang2022self} 
& 74.0           & 83.2           & 92.9            & 71.3           & 88.6           & 53.8           & 77.3 \\

MedPrompt \citep{nori2023can} 
& 74.4           & 83.0           & 92.1            & 69.2           & 88.1           & 53.7           & 76.8 \\

MultiPersona \citep{wang2023unleashing} 
& 73.1           & 81.3           & 92.9            & 70.4           & 89.8           & 51.9           & 76.5 \\

Self Refine \citep{madaan2024self} 
& 73.6           & 82.5           & 91.1            & 70.0           & 87.5           & 50.0           & 75.8 \\

ADAS \citep{hu2024automated} 
& 78.5           & 81.3           & 88.8            & 68.7           & 90.5           & 51.7           & 76.6 \\

Aflow \citep{zhang2024aflow} 
& 77.9           & 83.5           & 92.9            & 82.9           & 90.8           & 55.8           & 80.6 \\

\textbf{ScoreFlow (Ours)} 
& \textbf{86.0}  & \textbf{86.2}  & \textbf{95.9}   & \textbf{84.7}  & \textbf{94.6}  & \textbf{64.4}  & \textbf{85.3} \\
\specialrule{1.2pt}{0pt}{0pt} 
\end{tabular}
\vspace{-1mm}
\end{table*}

\begin{table*}[!t]
\scriptsize
\renewcommand\tabcolsep{3.2pt}
\renewcommand\arraystretch{1.2}
\small
\setlength{\abovecaptionskip}{0.1cm}
\setlength{\belowcaptionskip}{-0.2cm}
\centering
\caption{Comparison of different optimization methods within our ScoreFlow framework: We retain our pipeline, ScoreFlow, and replace the finetuning method to serve as baselines. Each method was tested three times, and we report the average solve rates on both validation and test set. The value on the left represents the performance on the validation set, while the value on the right represents the performance on the test set.
}
\label{tabfinetunemethod}
\scalebox{1.0}{%
\begin{tabular}{l|cc|cc|cc|cc|cc|cc}
\specialrule{1.2pt}{0pt}{0pt} 
\textbf{Method} 
& \multicolumn{2}{c|}{\textbf{HotpotQA}} 
& \multicolumn{2}{c|}{\textbf{DROP}} 
& \multicolumn{2}{c|}{\textbf{HumanEval}} 
& \multicolumn{2}{c|}{\textbf{MBPP}} 
& \multicolumn{2}{c|}{\textbf{GSM8K}} 
& \multicolumn{2}{c}{\textbf{MATH}} \\
\hline
\hline
SFT 
& 88.1 & 84.0 
& 85.5 & 82.3 
& 85.9 & 93.4 
& 83.5 & 82.0 
& 88.5 & 89.8 
& 49.6 & 54.8 \\

PPO    
& 87.9 & 84.2 
& 86.0 & 83.8 
& 84.8 & 92.7 
& 83.7 & 82.9 
& 87.7 & 89.2 
& 50.0 & 55.2 \\

DPO    
& 88.3 & 84.1 
& 85.3 & 84.2 
& 86.9 & 95.9 
& 84.1 & 82.9 
& 90.2 & 91.7 
& 53.6 & 60.4 \\

\textbf{Score-DPO (Ours)}
& \textbf{89.2} & \textbf{86.0} 
& \textbf{88.5} & \textbf{86.2} 
& \textbf{87.9} & \textbf{95.9} 
& \textbf{86.0} & \textbf{84.7} 
& \textbf{93.7} & \textbf{94.6} 
& \textbf{56.5} & \textbf{64.4} \\
\specialrule{1.2pt}{0pt}{0pt} 
\end{tabular}%
}
\vspace{-3mm}
\end{table*}

\subsection{Results and Analysis}
\label{secexp}

\paragraph{Main Results}
The main results are presented in Table~\ref{tab:mainresult}. Our proposed method, ScoreFlow, consistently outperforms all manually designed workflow methods as well as automated workflow optimization methods included in the baselines across all benchmarks. Notably, our method achieves an average solve rate of $85.3\%$, surpassing the baseline methods by a margin of $8.2\%$. The two automated workflow optimization methods, despite employing GPT-4o-mini as the workflow generator, consistently underperform compared to our approach, which utilizes a significantly smaller 8B model as the generator, across all tasks. These results highlight the robustness and effectiveness of ScoreFlow in optimizing workflows and achieving improved performance across diverse tasks.

\begin{table*}[tp]
\scriptsize
\renewcommand\tabcolsep{3.2pt}
\renewcommand\arraystretch{1.2}
\small
\setlength{\abovecaptionskip}{0.1cm}
\setlength{\belowcaptionskip}{-0.2cm}
\centering
\caption{Comparison of Performance Across Different Models and Methods on HumanEval task.
We conduct ablation studies on both the generator and the executor. For the generator ablation, we use Llama-3.1-8B-Instruct (Ours) and Qwen2.5-7B-Instruct (Ours*). For the executor ablation, we employ GPT-4o-mini, GPT-4o, DeepSeek-V3, and DeepSeek-coder. Each method is evaluated three times, and we report the average results.}
\label{tab:ablationmodel}
\begin{tabular}{l|ccccccccc}
\specialrule{1.2pt}{0pt}{0pt} 
\textbf{Executor} & \textbf{Ours} & \textbf{Ours*} & \textbf{Aflow} & \textbf{IO} & \textbf{CoT} & \textbf{CoT SC} & \textbf{MP} & \textbf{MPD} & \textbf{SR} \\
\hline
GPT-4o-mini 
& 95.7 & 95.1 & 92.9 & 90.1 & 91.6 & 92.9 & 92.1 & 92.9 & 91.1 \\

GPT-4o 
& \textbf{97.7} & 97.4 & 94.7 & 93.1 & 93.4 & 93.9 & 95.9 & 96.2 & 92.6 \\

DeepSeek-V3 
& 97.2 & 96.9 & 94.7 & 90.8 & 90.1 & 93.4 & 93.9 & 92.9 & 93.9 \\

DeepSeek-coder 
& \textbf{97.7} & 96.7 & 93.4 & 91.3 & 92.4 & 94.7 & 95.2 & 94.4 & 94.4 \\
\specialrule{1.2pt}{0pt}{0pt} 
\end{tabular}
\vspace{0mm}
\end{table*}

\paragraph{Improvements of Proposed Score-DPO}
To demonstrate the utility of our preference optimization method, Score-DPO, we compare our results with additional designed baselines that replace our finetuning method with alternative approaches (while retaining our overall pipeline, ScoreFlow): supervised finetuning (SFT), proximal policy optimization (PPO), and direct preference optimization (DPO). 
For SFT, we select the preferred responses to fine-tune the generator, where the preferred responses are sampled using score-sampling (part of our proposed method). For PPO, we follow \citet{huang2024n+}: Firstly train a reward model (share the same base model of generator) using the collected preference data and then optimize the generator based on the reward model. For raw DPO, we directly use the collected preference data and the original Bradley-Terry model, performing optimization for 3 iterations. 
We report the solve rates on both the validation set and the test set.
Table~\ref{tabfinetunemethod} demonstrates the effectiveness of our proposed method, Score-DPO. The SFT method only provides the generator with information about preferred responses, neglecting the rejected responses. Using PPO to optimize over long token sequences in our setting can dilute gradient signals, making it more difficult for the model to discern which parts of the sequence contribute most to the reward. This leads to instability and degraded performance \citep{schulman2017proximal, chen2021evaluating}. Score-DPO incorporates specific evaluation ranking information into DPO, achieving both efficiency and the best performance among the baselines.

\paragraph{Gradient Loss Optimization and Adaptivity Enhance Scalability}
We demonstrate how the loss-gradient optimization method, combined with an adaptive framework, enhances scalability by maintaining high performance when applied to more diverse and larger problem datasets. We conduct a comparison with the second-best performing baseline method, AFlow. In this experiment, we integrate math, coding, and question-answering tasks by selecting datasets where Aflow shows the smallest performance difference with ScoreFlow: GSM8K (math), MBPP (coding), and DROP (question answering). These datasets are then combined for optimization and evaluation. Figure~\ref{differencefig} demonstrates that ScoreFlow achieves a more pronounced performance advantage over AFlow on the more diverse combined dataset. AFlow employs a standard discrete optimization method to optimize a single workflow, where a few failed cases are randomly selected and fed into the optimizer LLM to refine the workflow in each iteration. The discrete optimization approach and lack of adaptability could limit scalability. In contrast, our adaptive generation framework, coupled with the loss-gradient optimization method, effectively addresses these challenges.

\paragraph{Case Study on Adaptivity} In ScoreFlow, the task information is provided to the generator to facilitate the creation of adaptive workflows. Specifically, the generator has the flexibility to select appropriate operators and adapt the complexity of the workflow structure based on the characteristics of the given problem.
In Figure~\ref{figspecificeample}, two distinct GSM8K problems are each assigned unique workflows, with the correct answer achievable only when the corresponding workflow is utilized.  For the more complex and computation-intensive Problem B, constructing a sophisticated workflow with program and review operators helps mitigate calculation errors, while calculation errors occur when using a simple workflow. Conversely, for the simple, concise, and calculation-light Problem A, employing an overly complex workflow can result in overthinking and inefficiency. This underscores the critical role of adaptivity in workflow generation.

\begin{figure}[H]
    \centering
    \includegraphics[width=0.4\textwidth]{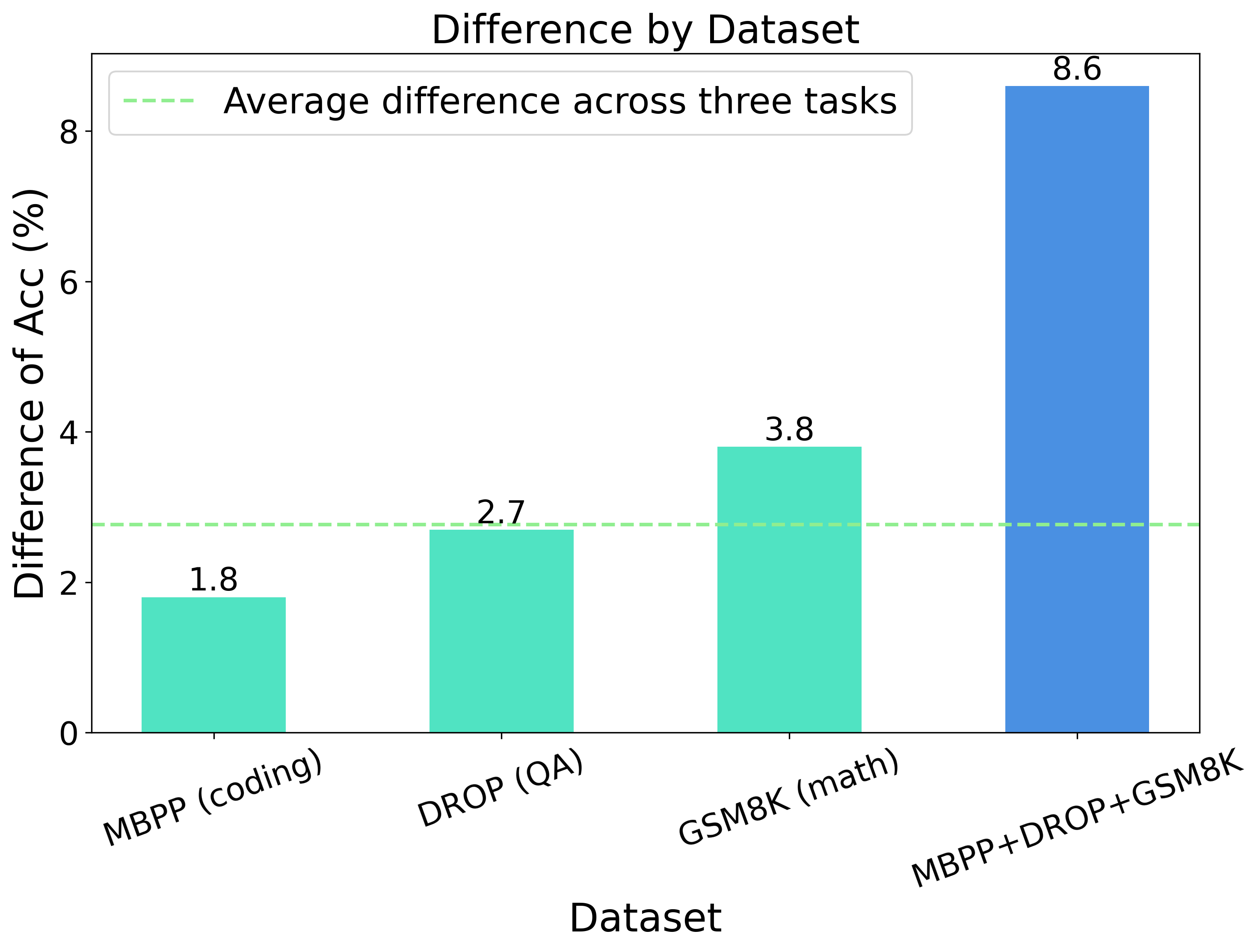}
    \caption{Performance comparison between ScoreFlow and Aflow across various datasets. The y-axis represents the difference in accuracy ($\%$), calculated as the win rate of ScoreFlow minus the win rate of AFlow on test set. The executor for both methods are GPT-4o-mini. The optimizer LLM (generator) for Aflow is GPT-4o-mini, while the generator for Scoreflow is Llama-3.1-8B-Instruct. Specifically, ScoreFlow achieves a $88.1\%$ performance on the combined task.}
    \label{differencefig}
    \vspace{-3mm}
\end{figure}

\paragraph{Robust for Different LLM Architectures} We conduct ablation studies on both the generator and the executor. For the generator ablation, we use Llama-3.1-8B-Instruct and Qwen2.5-7B-Instruct. For the executor ablation, we employ GPT-4o-mini, GPT-4o, DeepSeek-V3, and DeepSeek-coder. Specifically, in the GPT-4o setting, we utilize GPT-4o-mini during the optimization process and switch to GPT-4o at test time, as GPT-4o is prohibitively expensive for optimization in both ScoreFlow and Aflow.
From the results in Table~\ref{tab:ablationmodel}, we first demonstrate the robustness of our method by showing that it consistently outperforms baseline methods across various combinations of generators and executors.
Second, we observe that ScoreFlow, when utilizing smaller models such as GPT-4o-mini and DeepSeek-V3, outperforms the Chain-of-Thought (CoT) outputs of the larger GPT-4o model.
The best performance is achieved when GPT-4o or DeepSeek-coder is used as the generator. Notably, although DeepSeek-V3 exhibits a performance gap compared to GPT-4o and DeepSeek-coder when evaluated as a standalone agent, workflow optimization enables DeepSeek-V3 to achieve performance comparable to that of GPT-4o and DeepSeek-coder. This result highlights the effectiveness of our proposed method.

\begin{figure*}[t]
    \centering
    \begin{subfigure}[t]{0.45\textwidth}
        \centering
        \includegraphics[height=5.3cm]{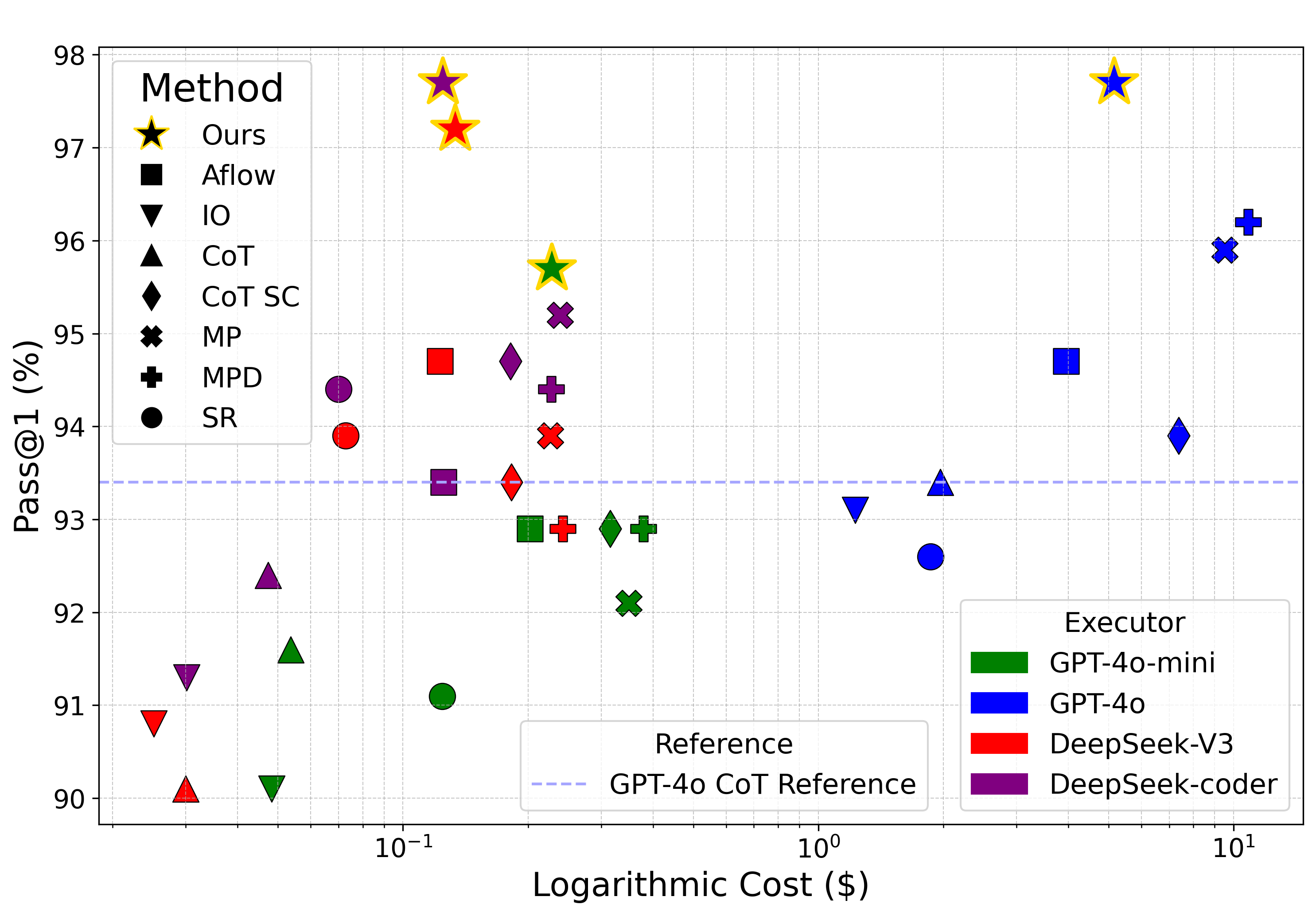}
        \caption{Cost during inference on testing set.}
        \label{figcostinference}
    \end{subfigure}
    \hspace{0.02\textwidth} %
    \begin{subfigure}[t]{0.45\textwidth}
        \centering
        \includegraphics[height=5.3cm]{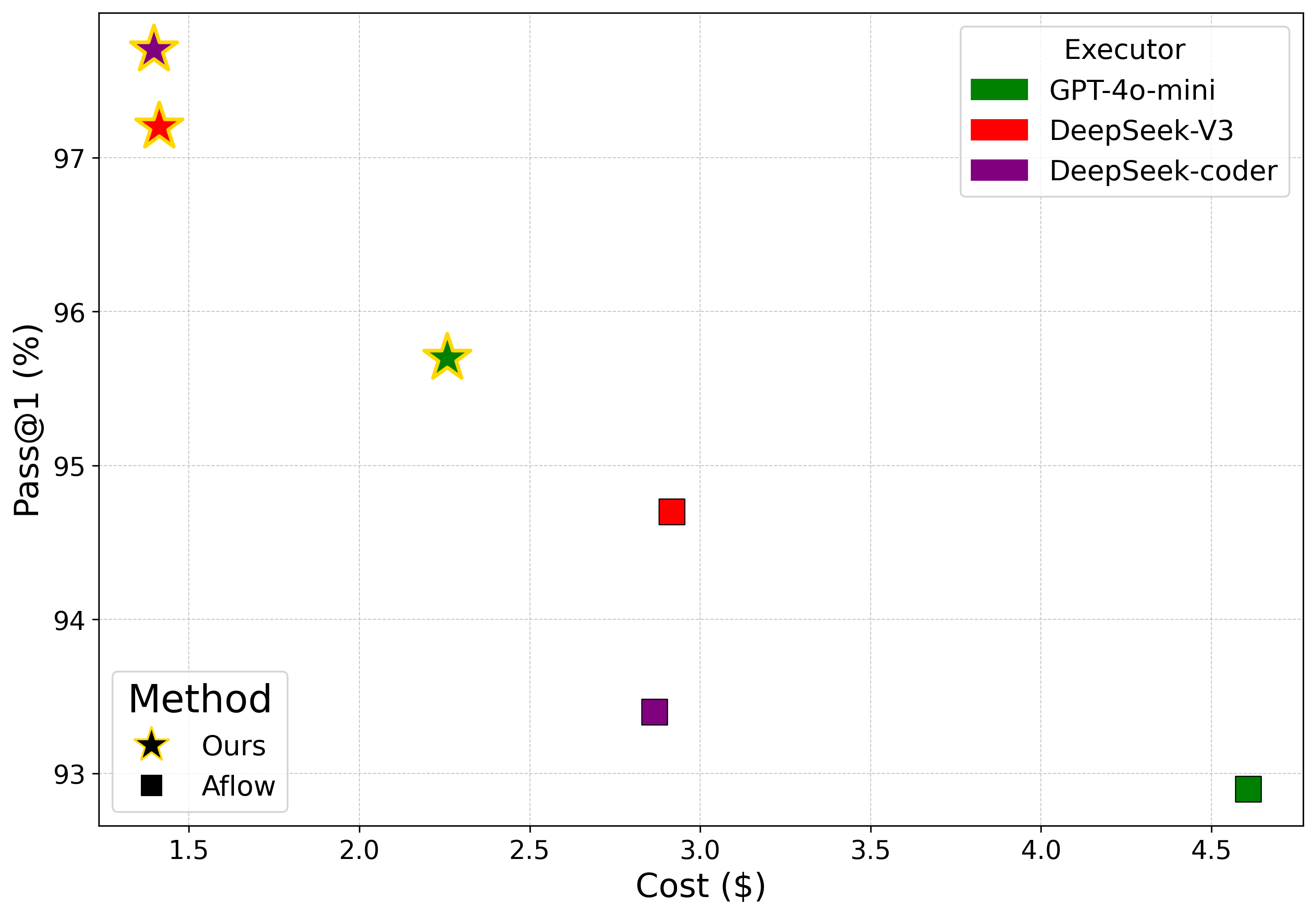}
        \caption{Cost during optimization.}
        \label{figcostoptimization}
    \end{subfigure}
    \caption{API Cost in Inference and Optimization processes.
We analyze the API cost during both the inference and optimization processes, comparing different methods across various executors for the HumanEval task. The left figure illustrates the cost during inference on the testing set in relation to Pass@1 performance. The right figure highlights the total cost of optimization for ScoreFlow and AFlow. The generator for our method here is Llama-3.1-8B-Instruct.}
\label{figcost}
\vspace{-3mm}
\end{figure*}

\paragraph{Cost Efficiency} 
Using an open-source LLM as the base model and leveraging fast convergence in optimization minimizes the expense of our method. We firstly analyze the API costs during the inference stage for different methods, across 4 different versions of executors, focusing on the HumanEval task. Results in Figure~\ref{figcostinference} demonstrate that ScoreFlow enables weaker models to achieve better cost-effectiveness than stronger models, balancing performance and resource usage optimally. For example, ScoreFlow utilizes smaller models such as GPT-4o-mini, DeepSeek-V3, and DeepSeek-coder to achieve significantly better performance than GPT-4o's CoT approach, while maintaining much lower costs.
Compared to the inference stage, the optimization process in automated workflow optimization methods is more computationally expensive, as it requires evaluation feedback from the executor at each iteration. Therefore we also compare the expense during optimization process with Aflow. From Figure~\ref{figcostoptimization}, we demonstrate that ScoreFlow consistently costs less than Aflow in the optimization process while performing better, which highlights the cost-efficiency of our method.

\paragraph{Iterative Process Analysis} Figure~\ref{iterprocessfig} illustrates the changes in solve rate during the iterative process. 
The consistent increase in test solve rate, followed by its eventual convergence, demonstrates the effectiveness of the iterative approach. We observed that rapid convergence can be achieved as early as the second iteration in our study.

\begin{figure}[ht]
    \centering
    \includegraphics[width=0.7\textwidth]{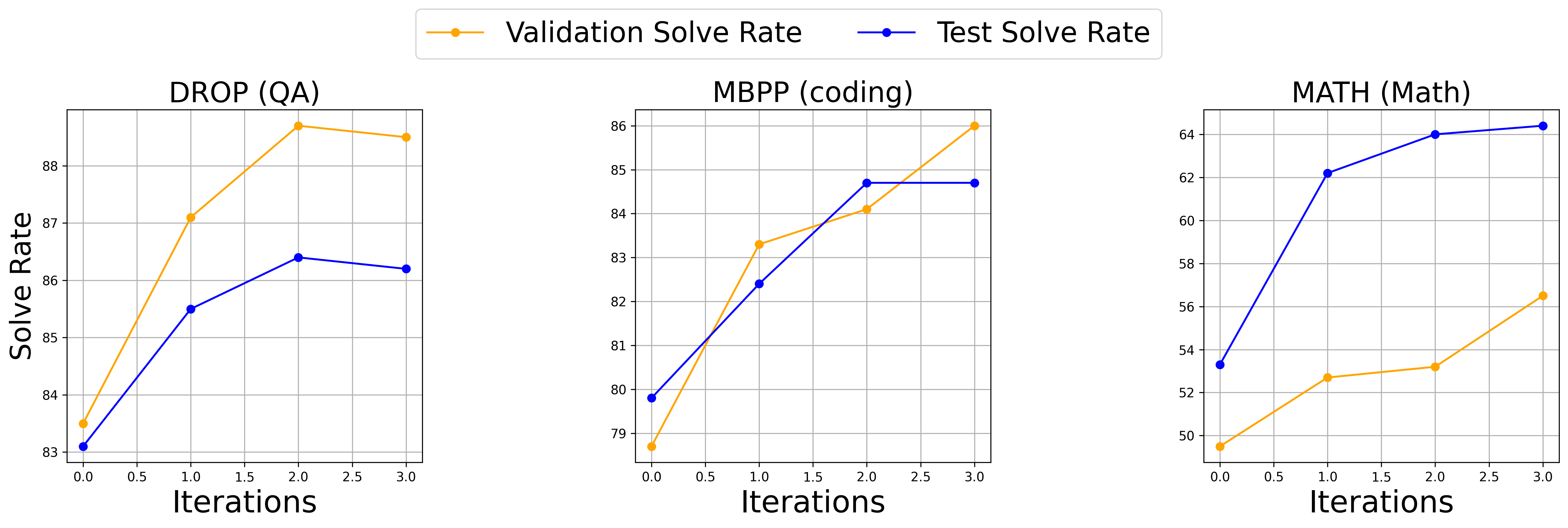}
    \caption{Solve rate during iteration process.}
    \label{iterprocessfig}
\vspace{-5mm}
\end{figure}

\section{Conclusion}

In this work, we propose ScoreFlow, an automated, high-performance, and adaptive framework for optimizing multi-agent workflows. The framework leverages the generalizable Score-DPO to achieve robust and efficient optimization. By replacing traditional discrete optimization algorithms with loss-gradient-based optimization, we enhance the framework's flexibility and scalability. Score-DPO, as an effective preference optimization method, reduces inaccuracies and variances in collected data pairs, thereby improving overall performance by incorporating evaluation scores directly into the optimization process.

By evaluating six benchmarks spanning question answering, coding, and mathematical reasoning tasks, ScoreFlow achieves an average improvement of 8.2\% over baseline methods. Additionally, Score-DPO consistently outperforms widely used preference optimization methods. Comprehensive ablation studies across various models highlight the robustness and cost-efficiency of our approach. Notably, our method enables smaller models to outperform larger models while incurring lower API costs.

\bibliography{main}

\newpage
\appendix
\onecolumn
\section{Appendix}

\subsection{Proofs}

To help prove Theorem~\ref{deriscoreDPO}, we need the following lemma.

\begin{lemma}
\label{derigxsig}
$F(x) = x \sigma(-a x + b)$ is strictly monotonically increasing with $x > 0$ if $a\le1/x$.
\end{lemma}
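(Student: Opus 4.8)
The plan is to prove Lemma~\ref{derigxsig} by a direct differentiation argument, showing that $F'(x) \ge 0$ for $x>0$ under the hypothesis $a \le 1/x$. First I would compute the derivative using the product rule: writing $u(x) = x$ and $v(x) = \sigma(-ax+b)$, we have $F'(x) = \sigma(-ax+b) + x \cdot \sigma'(-ax+b) \cdot (-a)$. Here I would invoke the standard identity $\sigma'(t) = \sigma(t)(1-\sigma(t))$, so that $F'(x) = \sigma(-ax+b) - a x\, \sigma(-ax+b)\bigl(1 - \sigma(-ax+b)\bigr)$.

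Next I would factor out $\sigma(-ax+b)$, which is strictly positive, leaving $F'(x) = \sigma(-ax+b)\bigl[1 - ax\bigl(1 - \sigma(-ax+b)\bigr)\bigr]$. It then suffices to show the bracketed factor is nonnegative. Since $0 < 1 - \sigma(-ax+b) < 1$ and $ax \le 1$ (which is exactly the hypothesis $a \le 1/x$, using $x>0$), we get $ax\bigl(1-\sigma(-ax+b)\bigr) \le ax \le 1$, hence the bracket is $\ge 0$. To get strict monotonicity, I would note that the bracket is in fact strictly positive: $1 - \sigma(-ax+b) < 1$ strictly, so $ax(1-\sigma(-ax+b)) < ax \le 1$, giving $F'(x) > 0$ for all $x > 0$. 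Therefore $F$ is strictly increasing on $(0,\infty)$.

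There is essentially no main obstacle here — the lemma is a routine calculus fact, and the only thing to be slightly careful about is that the hypothesis $a \le 1/x$ is to be read as a pointwise condition (it holds at the point $x$ where we evaluate $F'$), and that the sigmoid's boundedness strictly between $0$ and $1$ is what upgrades the inequality to a strict one. I would also remark, for use in the proof of Theorem~\ref{deriscoreDPO}, that the same computation shows $F$ is increasing whenever $ax \le 1$, which is how the condition $-(1-f(s_z))^{-1} \le r_z \le f^{-1}(s_z)$ will later translate into monotonicity of the per-sample influence in $s_z$.
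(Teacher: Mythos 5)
Your proof is correct and is essentially identical to the paper's: both differentiate, apply $\sigma'(t)=\sigma(t)(1-\sigma(t))$, factor out $\sigma(-ax+b)$, and show the bracket $1-ax(1-\sigma(-ax+b))$ is positive under $ax\le 1$. Your additional remarks on reading the hypothesis pointwise and on how the lemma feeds into Theorem~\ref{deriscoreDPO} match the paper's usage.
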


\begin{proof}
Given $a\le 1/x$, we have
\begin{align*}
F'(x) &=  \sigma(-a x + b) - a x \sigma(-a x + b) (1 - \sigma(-a x + b))\\
&= \sigma(-a x + b) [1 - a x(1 - \sigma(-a x + b)) ] > 0,
\end{align*}
which means $F(x)$ is strictly monotonically increasing.
\end{proof}

Now we are ready to prove Theorem~\ref{deriscoreDPO}.

\begin{proof}[Proof of Theorem~\ref{deriscoreDPO}]
Through straightforward calculation, we have 
\begin{align*}
I(z) &= \frac{\partial}{\partial r_z} \mathbb{E}_{(w, l) \sim P^{\star}} \left[ \log \sigma \big(r^{\star}_{w} - r^{\star}_{l}\big) \cdot \mathds{1}_{z \in \{w, l\}} \right]\\
& = \frac{\partial}{\partial r_z} \mathbb{E}_{(w, l) \sim P} \left[ d(s_w, s_l) \log \sigma \big(r^{\star}_{w} - r^{\star}_{l}\big) \cdot \mathds{1}_{z \in \{w, l\}} \right]\\
&= \mathbb{E}_{(w, l) \sim P} \Big[ d(s_w, s_l) \sigma (r^{\star}_{l} - r^{\star}_{w}) \big( f(s_w) \mathds{1}_{w = z}  - (1 - f(s_l)) \mathds{1}_{l = z} \big) \Big]\\
&= \underbrace{\mathbb{E}_{(w, l) \sim P} \Big[ \sigma (r^{\star}_{l} - f(s_z) r_{z}) f(s_z) d(s_z, s_l)  \mathds{1}_{w = z} \Big]}_{I_w(z)} - \underbrace{\mathbb{E}_{(w, l) \sim P} \Big[ \sigma ((1 - f(s_z)) r_z - r^{\star}_{w}) (1 - f(s_z)) d(s_w, s_z)  \mathds{1}_{l = z} \Big]}_{I_l(z)}.
\end{align*}
By Lemma~\ref{derigxsig} and condition $ -(1 - f(s_z))^{-1}\le r_z \le f^{-1}(s_z) $, we have $I_w(z)$ is strictly monotonically increasing with $s_z$, while $I_l(z)$ is strictly monotonically decreasing with $s_z$, which implies that $I(z)$ is strictly monotonically increasing with $s_z$.
\end{proof}

\newpage

\subsection{Detailed ScoreFlow Methods}
\label{appdetailedscoreflowmethods}

\subsubsection{Generator Prompt}

To generate the workflow for each problem, we use the following prompt:

\begin{tcolorbox}[
  colback=blue!5!white,
  colframe=gray,
  title=Generator prompt,
  fonttitle=\bfseries\footnotesize,
  sharp corners,
  breakable,
  listing engine=listings,
  listing options={
    language=Python,
    basicstyle=\ttfamily\footnotesize,
    breaklines=true
  }
]
\ttfamily
PROMPT = """
Your objective is to output a workflow graph, based on the following template: 
\{template\}
Here's an introduction to operators you can use: (these are all you can use, do not create new operators)
\{operator\_introductions\}
We have the task input as follow.
\{task\}
You need to notice:
Ensure your graph is based on the given template and is correct to avoid runtime failures. Do NOT import the modules operator and create, which have already been automatically imported. Do not load the operators not provided.
Introducing multiple appropriate operators at appropriate points can enhance performance. Consider Python's loops (for, list comprehensions) to generate multiple solutions to ensemble. Consider logical and control flow (IF-ELSE, loops) for a more enhanced graphical representation.
The graph complexity may corelate with the task complexity. Complex graphs may yield better results, but insufficient information transmission can omit the solution.
Your output graph must be optimized and different from the given template graph. Do not output graph without modification!
Your output graph can not contain any information of the given task due to project requirement. All the information of this problem will be given as input for operators and other agents will execute this workflow.
Only output the optimized graph (remember to add <graph> and </graph>, and the output can not contain specific information of the given task due to project requirement).
Here is the optimized graph:
"""
\end{tcolorbox}

The workflow template we provide is designed for a single agent to guide the generator in adhering to a predefined structure and minimizing runtime execution errors. The operator instructions serve as comprehensive descriptions of the permitted operators available for use. The task presented to the generator requires it to select appropriate operators and produce a workflow that is both well-structured and adaptive to the given task. Our execution model incorporates greater sophistication compared to open-source generators. Consequently, we include a detailed task analysis to guide the generator and ensure that it does not embed specific task-related information directly into the workflow. Additional requirements can be incorporated into the prompt to control the workflow generation.

The template we use imposes no prior knowledge of which structure to build or which operator to choose. We list the templates we used as follows:

\begin{tcolorbox}[colback=blue!5!white, colframe=gray, 
    title=Template for Question Answering, fonttitle=\bfseries\footnotesize, 
    sharp corners, parbox=false, breakable]
\begin{lstlisting}
async def run_workflow(self):
    """
    This is a workflow graph.
    """
    solution = await self.answer_generate()
    
    return solution
\end{lstlisting}
\end{tcolorbox}

\begin{tcolorbox}[colback=blue!5!white, colframe=gray, 
    title=Template for Math Problem, fonttitle=\bfseries\footnotesize, 
    sharp corners, parbox=false, breakable]
\begin{lstlisting}
async def run_workflow(self):
    """
    This is a workflow graph.
    """
    solution = await self.custom(instruction="Can you solve this problem by breaking it down into detailed steps and explaining the reasoning behind each step?")

    return solution
\end{lstlisting}
\end{tcolorbox}

\begin{tcolorbox}[colback=blue!5!white, colframe=gray, 
    title=Template for Coding Problem, fonttitle=\bfseries\footnotesize, 
    sharp corners, parbox=false, breakable]
\begin{lstlisting}
async def run_workflow(self):
    """
    This is a workflow graph.
    """
    solution = await self.code_generate(instruction="Can you analyze this problem step by step and generate the code?")
    
    return solution
\end{lstlisting}
\end{tcolorbox}

\newpage

\subsubsection{Operator Utilized}

For mathematical problems, we utilize the following custom operators: the \textbf{Custom Operator}, which generates outputs based on a fixed input problem and modifiable instructions; the \textbf{Programmer}, which automatically writes and executes Python code to derive and return the final solution based on the given problem description and analysis; the \textbf{Ensemble Operator}, which evaluates all generated solutions and selects the best one from the solution list; and the \textbf{Reviewer}, which reviews previous solutions to refine and regenerate improved solutions.

For question-answering problems, we utilize the following operators: the Custom Operator, the \textbf{AnswerGenerate Operator}, which directly generates answers, including the reasoning process, for the given problem; the Ensemble Operator, which evaluates all generated answers and selects the best one; and the Reviewer, which reviews and refines previous answers to produce improved solutions. 
For coding tasks, we utilize the following operators: the \textbf{CustomCodeGenerate Operator}, which generates code based on customized input instructions; the Ensemble Operator, which evaluates multiple code solutions and selects the best one; and the \textbf{Test Operator}, which refines the input solution by testing it against public test cases. We also include an Answer Extractor Agent after the final response in each generated workflow to eliminate redundant information and ensure concise and precise evaluation. The design of these operators is based on the Aflow framework \citep{zhang2024aflow}.

The following are the introductions to the operators we used.
\begin{tcolorbox}[colback=blue!5!white, colframe=gray, 
    title=Introductions to Operators, fonttitle=\bfseries\footnotesize, 
    sharp corners, parbox=false, breakable]
\begin{lstlisting}
1. Custom:
Usage: Generates anything based on fixed input problem and modifiable instruction.
Format MUST follow: custom(instruction: str) -> str
You can modify the instruction prompt. The output can serve as the input of next operators or the final output.
2. CustomCodeGenerate:
Usage: Generates code based on customized input instruction.
Format MUST follow: code_generate(instruction: str) -> str
The instruction should encourage operator to think step by step, do not add the specific information of the task into the input instruction.
The output can serve as the input of next operators or the final output.
3. AnswerGenerate:
Usage: Directly generate answer (including thought) to the given problem.
Format MUST follow: answer_generate() -> str
For example:
solution = await self.answer_generate()
The output can serve as the input of next operators or the final output.
4. Programmer:
Usage: Automatically writes, executes Python code, and returns the final solution based on the provided problem description and analysis.
Format MUST follow: programmer(analysis: str = 'None') -> str
The input analysis can be outputs of some other operators, for exmaple:
program_solution = await self.programmer(analysis=solution)
The output can serve as the input of next operators or the final output.
5. ScEnsemble:
Usage: Evaluate every solutions, then select the best solution in the solution list.
Format MUST follow: sc_ensemble(solutions: List[str]) -> str
You can ensemble few solutions, for example:
ensembled_solution = await self.sc_ensemble(solutions=solution_list)
The output can serve as the input of next operators or the final output.
6. Review:
Usage: Given previous solution, Review operator reviews the previous solution to regenerate the solution.
Format MUST follow: review(pre_solution: str) -> str
pre_solution should be solution from previous operator, for example
rev_solution = await self.review(pre_solution=pre_solution)
The output can serve as the input of next operators or the final output.
7. Test:
Usage: Modify the input solution by testing the solution using public test cases.
Format MUST follow: test(solution: str) -> str
tested_solution = await self.test(solution=pre_solution)
\end{lstlisting}
\end{tcolorbox}

\subsubsection{The Detailed Algorithm}

We have the detailed algorithm in Algorithm~\ref{alg:scoreflow}.

\begin{algorithm}[H]
\caption{\textbf{ScoreFlow}}
\label{alg:scoreflow}
\begin{algorithmic}[1]
\STATE \textbf{Input:} 
\STATE \quad 1) A set of problems $D = \{q_1, q_2, \dots, q_N\}.$
\STATE \quad 2) A workflow generator $G$ parameterized by $\theta.$
\STATE \quad 3) Number of iterations $M.$
\STATE \quad 4) Number of workflows generated per problem in optimization: $k.$
\STATE \quad 5) Number of preference samples generated in each iteration: $S.$
\STATE \quad 5) Executor LLM for evaluation.

\STATE \textbf{Initialize:} Generator parameters $\theta$.

\FOR{$t = 1$ to $M$ or not converged}
    \STATE \textbf{Collect preference data:}
    \FOR{each problem $q \in D$}
        \FOR{$i = 1$ to $k$}
            \REPEAT
                \STATE Use Generator $G_{\theta}$ to generate workflow $g_i(q)$ for problem $q$.
            \UNTIL{Condition $C^{\star}$ holds for $g_i(q)$}
            \STATE Collect the workflow $g_i(q)$ for problem $q$.
        \ENDFOR
        \STATE Obtain $k$ candidate workflows $\{g_i(q)\}_{i=1}^k$ using $g$.
        \STATE Evaluate each $g_i(q)$ with the executor LLM to obtain score $s_i \in [0,1]$.
        \STATE Construct preference pairs 
        \[
          D_q \!=\! \Big\{\!( (q, g_i(q)), (q, g_j(q)) ) \mid s_i > s_j \!\Big\}.
        \]
    \ENDFOR
    \STATE Aggregate preferences $D_{pre} \leftarrow \bigcup_{q \in D} D_q$ (Denote its raw distribution as $P$).
    \STATE \textbf{Update the generator via Score-DPO:}
    \FOR{$j = 1$ to $S$ or not converged}
        \STATE Generate preference samples $(w, l)$ by sampling distribution $P^{\star}(w, l) \propto P(w, l) d(w, l)$.
        \STATE Calculate $r_w^{\star} = f(s_w) r_w$ and $r_l^{\star} = ( 1- f(s_l)) r_l$.
        \STATE Obtain loss function:
        \[
        \mathcal{L}_{\text{Score-DPO}} = -  \log \sigma (r^{\star}_{w} - r^{\star}_{l}) 
        \]
        \STATE Fine-tune $G_{\theta}$ to obtain updated parameters \( \theta \leftarrow \theta - \eta \nabla_{\theta} \mathcal{L}_{\text{Score-DPO}} \), where \( \eta \) is the learning rate.
    \ENDFOR
\ENDFOR

\STATE \textbf{Output:} Trained generator $G_{\theta}$.
\end{algorithmic}
\end{algorithm}

The condition \( C^{\star} \) ensures quality control of our workflow by guaranteeing the absence of runtime errors and adherence to execution time limits. We select $M = 3$ and $k = 8$. We set \( S = 2000 \) for all datasets except for the smallest one, HumanEval, where we use \( S = 600 \).

\newpage

\subsection{Additional Experiment Results}

\subsubsection{Sampled workflows}

The following are some examples of the generated workflows.

\begin{tcolorbox}[colback=blue!5!white, colframe=gray, 
    title=Example Workflow 1 for Question Answering, fonttitle=\bfseries\footnotesize, 
    sharp corners, parbox=false, breakable]
\begin{lstlisting}
async def run_workflow(self):
    """
    This is a workflow graph.
    """
    solution = await self.answer_generate()
    
    solution = await self.review(solution)
    
    solution_list = [solution]
    for _ in range(3):
        solution = await self.custom_2(
            instruction="Can you solve this problem by breaking it down into detailed steps and explaining the reasoning behind each step?"
        )
        solution_list.append(solution)
    
    ensembled_solution = await self.sc_ensemble(solution_list)

    return ensembled_solution
\end{lstlisting}
\end{tcolorbox}

\begin{tcolorbox}[colback=blue!5!white, colframe=gray, 
    title=Example Workflow 2 for Question Answering, fonttitle=\bfseries\footnotesize, 
    sharp corners, parbox=false, breakable]
\begin{lstlisting}
async def run_workflow(self):
    """
    This is a workflow graph.
    """
    solution_1 = await self.custom_1(instruction="Can you break down the problem into smaller steps?")
    
    solution_2 = await self.answer_generate_1()
    
    solution_3 = await self.sc_ensemble_1(solutions=[solution_1, solution_2])
    
    solution_4 = await self.review_1(pre_solution=solution_3)
    
    solution_5 = await self.custom_2(instruction="Can you explain the reasoning behind each step?")
    
    solution_6 = await self.sc_ensemble_2(solutions=[solution_4, solution_5])
    
    solution_7 = await self.review_2(pre_solution=solution_6)
    
    return solution_7
\end{lstlisting}
\end{tcolorbox}

\begin{tcolorbox}[colback=blue!5!white, colframe=gray, 
    title=Example Workflow 3 for Question Answering, fonttitle=\bfseries\footnotesize, 
    sharp corners, parbox=false, breakable]
\begin{lstlisting}
async def run_workflow(self):
    """
    This is a workflow graph.
    """
    solution_list = []
    for _ in range(3):
        solution = await self.custom_1(instruction="Can you break down the problem into smaller steps?")
        solution_list.append(solution)
    
    ensembled_solution = await self.sc_ensemble(solutions=solution_list)
    
    rev_solution = await self.review(pre_solution=ensembled_solution)

    return rev_solution
\end{lstlisting}
\end{tcolorbox}

\begin{tcolorbox}[colback=blue!5!white, colframe=gray, 
    title=Example Workflow 1 for Math Problem, fonttitle=\bfseries\footnotesize, 
    sharp corners, parbox=false, breakable]
\begin{lstlisting}
async def run_workflow(self):
    """
    This is a workflow graph.
    """
    solution_1 = await self.custom_1(instruction="Can you break down the problem into smaller steps?")
    
    solution_2 = await self.custom_2(instruction="Can you explain the reasoning behind each step?")
    
    solution_list = [solution_1, solution_2]
    ensembled_solution = await self.sc_ensemble(solutions=solution_list)
    
    analysis = ensembled_solution
    program_solution = await self.programmer(analysis=analysis)
    
    final_solution = await self.review(pre_solution=program_solution)
    
    return final_solution
\end{lstlisting}
\end{tcolorbox}

\begin{tcolorbox}[colback=blue!5!white, colframe=gray, 
    title=Example Workflow 2 for Math Problem, fonttitle=\bfseries\footnotesize, 
    sharp corners, parbox=false, breakable]
\begin{lstlisting}
async def run_workflow(self):
    """
    This is a workflow graph.
    """
    solution1 = await self.custom1(instruction="Can you break down the problem into smaller steps and explain each step clearly?")
    
    solution2 = await self.custom1(instruction="Can you explain the problem and provide a step-by-step solution?")
    
    solution3 = await self.custom2(instruction="Can you describe the problem and provide a detailed solution?")
    
    solutions = [solution1, solution2, solution3]
    ensembled_solution = await self.sc_ensemble(solutions=solutions)
    
    program_solution = await self.programmer(analysis=ensembled_solution)
    
    final_solution = await self.review(pre_solution=program_solution)

    return final_solution
\end{lstlisting}
\end{tcolorbox}

\begin{tcolorbox}[colback=blue!5!white, colframe=gray, 
    title=Example Workflow 3 for Math Problem, fonttitle=\bfseries\footnotesize, 
    sharp corners, parbox=false, breakable]
\begin{lstlisting}
async def run_workflow(self):
    """
    This is a workflow graph.
    """
    solution_1 = await self.custom_1(instruction="Can you break down the problem into smaller steps?")
    
    solution_2 = await self.custom_2(instruction="Can you explain the solution in a clear and concise manner?")
    
    analysis = solution_1 + solution_2
    program_solution = await self.programmer(analysis=analysis)
    
    solutions = [solution_1, solution_2, program_solution]
    ensembled_solution = await self.sc_ensemble(solutions=solutions)
    
    final_solution = await self.review(pre_solution=ensembled_solution)
    
    return final_solution
\end{lstlisting}
\end{tcolorbox}

\begin{tcolorbox}[colback=blue!5!white, colframe=gray, 
    title=Example Workflow 1 for Coding Problem, fonttitle=\bfseries\footnotesize, 
    sharp corners, parbox=false, breakable]
\begin{lstlisting}
async def run_workflow(self):
    """
    This is a workflow graph.
    """
    solution = await self.code_generate(instruction="Can you analyze the problem step by step and generate the code?")
    solution_list = [solution]
    
    for i in range(3):
        solution_list.append(await self.code_generate(instruction="Can you think step by step and generate the code?"))
    
    ensembled_solution = await self.sc_ensemble(solutions=solution_list)
    
    tested_solution = await self.test(solution=ensembled_solution)
    
    return tested_solution
\end{lstlisting}
\end{tcolorbox}

\begin{tcolorbox}[colback=blue!5!white, colframe=gray, 
    title=Example Workflow 2 for Coding Problem, fonttitle=\bfseries\footnotesize, 
    sharp corners, parbox=false, breakable]
\begin{lstlisting}
async def run_workflow(self):
    """
    This is a workflow graph.
    """
    solution_list = []
    for i in range(5):
        solution = await self.code_generate(instruction="Can you generate a code to solve a problem?")
        solution_list.append(solution)
    
    ensembled_solution = await self.sc_ensemble(solutions=solution_list)

    tested_solution = await self.test(solution=ensembled_solution)
    return tested_solution
\end{lstlisting}
\end{tcolorbox}

\newpage

\subsubsection{Case study on $d(x, y)$}
\label{appalbd}

In this section, we analyze the performance of Score-DPO under various formulations of the function \( d(x, y) = (x - y)^{\alpha} \). DPO is a special case where \(\alpha = 0\) when ignoring $f(x)$. Our results demonstrate that increasing \(\alpha\) leads to improved performance, as higher values of \(\alpha\) upweight the more deterministic preference pairs, thereby reducing variance and error in the collected preference data. However, when \(\alpha\) is taken to an extreme, such as \(\alpha = 100\), the performance deteriorates significantly. This decline occurs because excessively prioritizing only the most deterministic pairs effectively disregards a substantial portion of the preference pair data. Moreover, DPO inherently tends to favor out-of-distribution (unseen) responses or data \citep{xu2024dpo}. By omitting less deterministic pairs, the model loses valuable information, which adversely impacts its ability to generalize effectively.

\begin{table}[H]
\scriptsize
\renewcommand\tabcolsep{3.2pt}
\renewcommand\arraystretch{1.2}
\small
\setlength{\abovecaptionskip}{0.1cm}
\setlength{\belowcaptionskip}{-0.2cm}
\centering
\caption{Case studies for $d(x, y)$ across MBPP, DROP, and MATH datasets. Values represent averaged solve rates on test set.}
\label{tabped(x,y)}
\scalebox{1.0}{%
\begin{tabular}{l|c|c|c}
\specialrule{1.2pt}{0pt}{0pt} 
\textbf{Method} 
& \textbf{MBPP} 
& \textbf{DROP} 
& \textbf{MATH} \\
\hline
\hline
DPO    
& 82.9 
& 84.2 
& 60.4 \\

$d(x, y) = (x - y)^2$   
& 83.5 
& 85.4 
& 61.4 \\

$d(x, y) = (x - y)^3$   
& 84.7 
& 86.2 
& 64.4 \\

$d(x, y) = (x - y)^{100}$   
& 80.1 
& 85.1 
& 59.2 \\
\specialrule{1.2pt}{0pt}{0pt} 
\end{tabular}%
}
\vspace{-3mm}
\end{table}

\subsubsection{Experimental Validation for Condition in Theorem~\ref{deriscoreDPO}}
\label{appevfcit}

The sufficient condition provided in Theorem~\ref{deriscoreDPO} is $|r_z| \le 1$. In this section, we evaluate this condition by estimating the probability that it holds during the optimization process. Our analysis reveals that this condition is satisfied with probabilities of $99.8\%$, $82.2\%$, and $91.2\%$ for the MATH, DROP, and MBPP datasets, respectively. On average, the condition is upheld with a probability of \textbf{$91.1\%$} across three different tasks in our experiments, demonstrating its robustness across diverse datasets and substantiating its practical applicability in guiding optimization under varied scenarios.

\begin{figure}[H]
    \centering
    \includegraphics[width=0.7\textwidth]{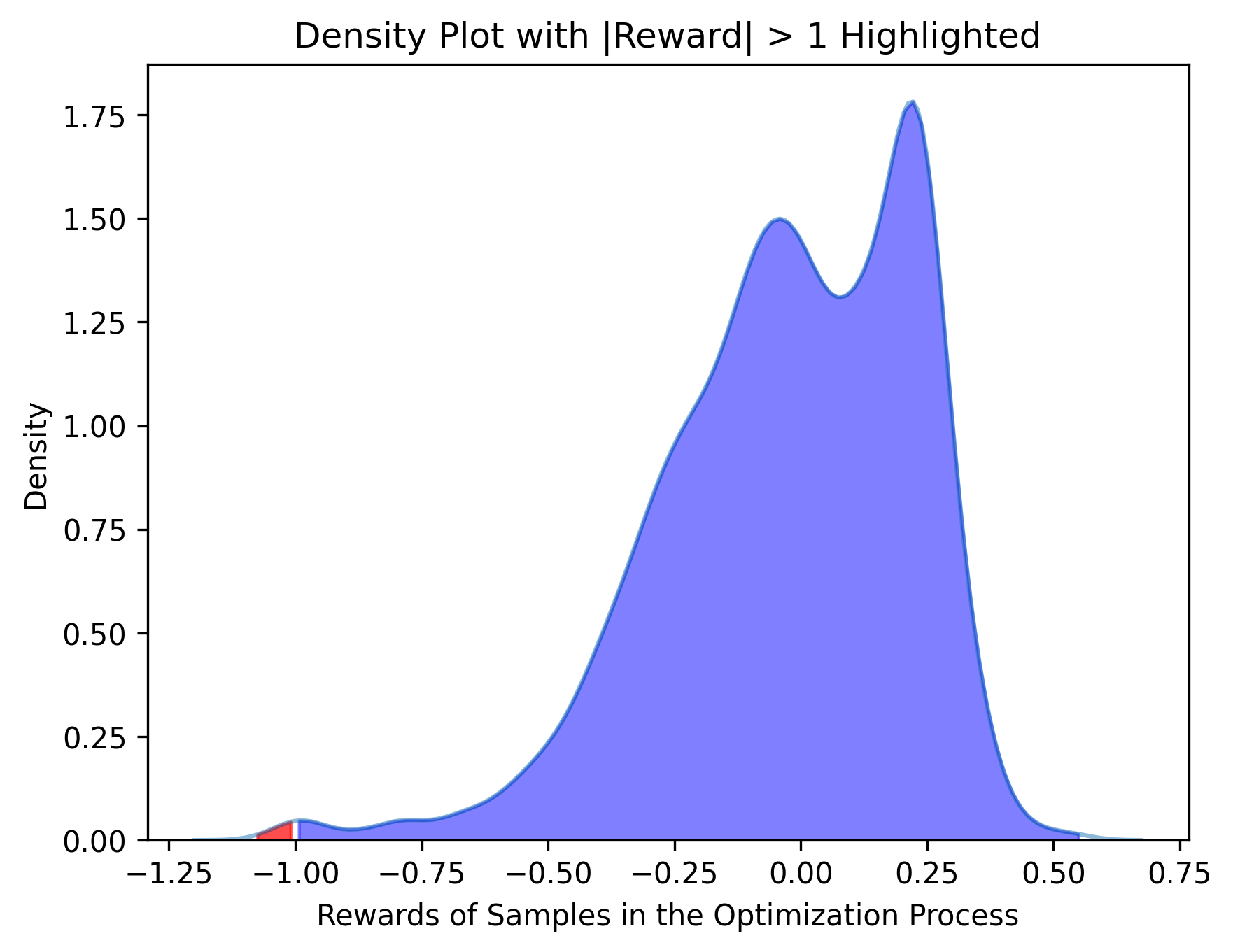}
    \caption{The distribution of sample implicit reward during optimization process before convergence (MATH).}
    \label{figmath_proportion}
\end{figure}

\begin{figure}[H]
    \centering
    \includegraphics[width=0.7\textwidth]{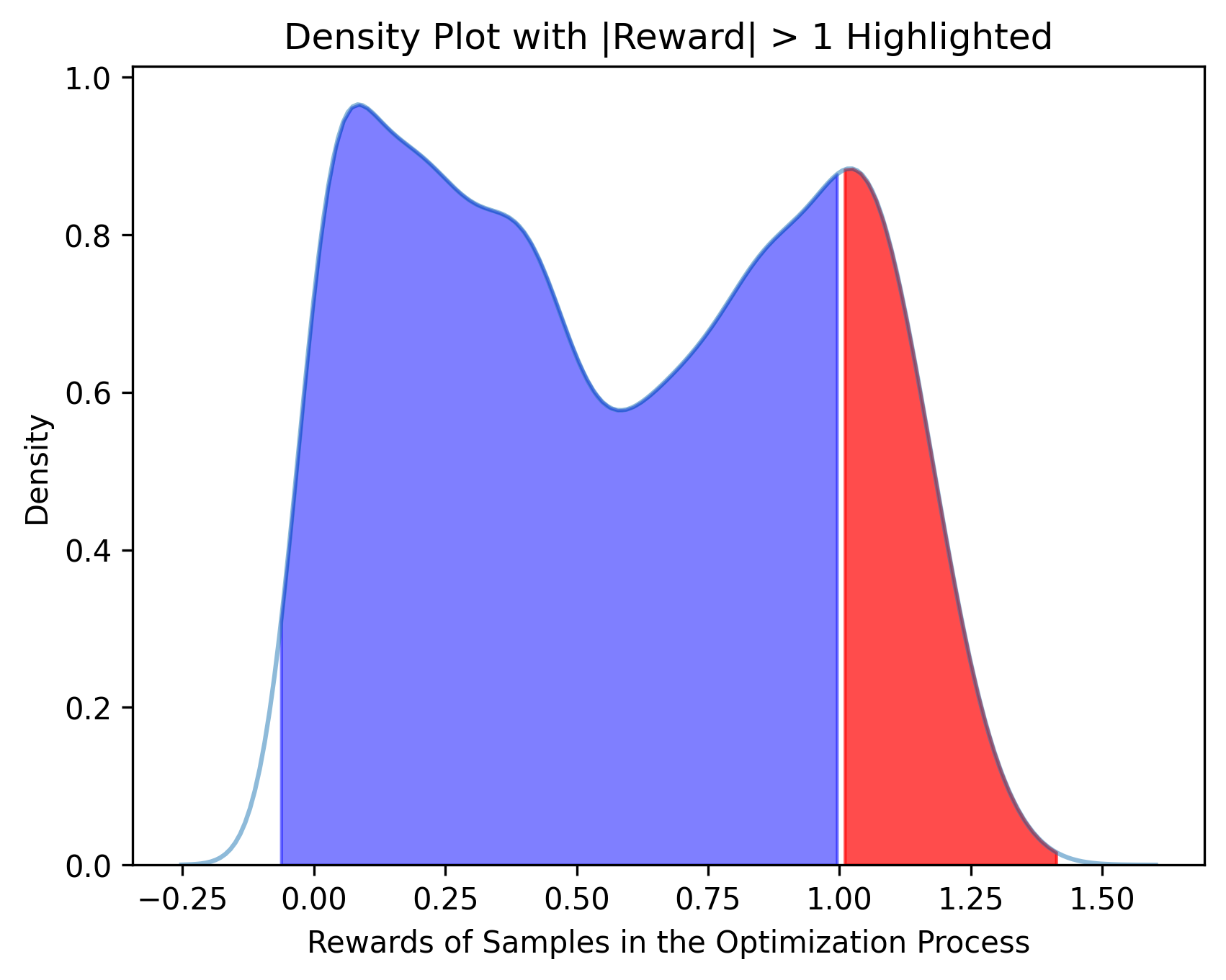}
    \caption{The distribution of sample implicit reward during optimization process before convergence (DROP).}
    \label{figdrop_proportion}
\end{figure}

\begin{figure}[H]
    \centering
    \includegraphics[width=0.7\textwidth]{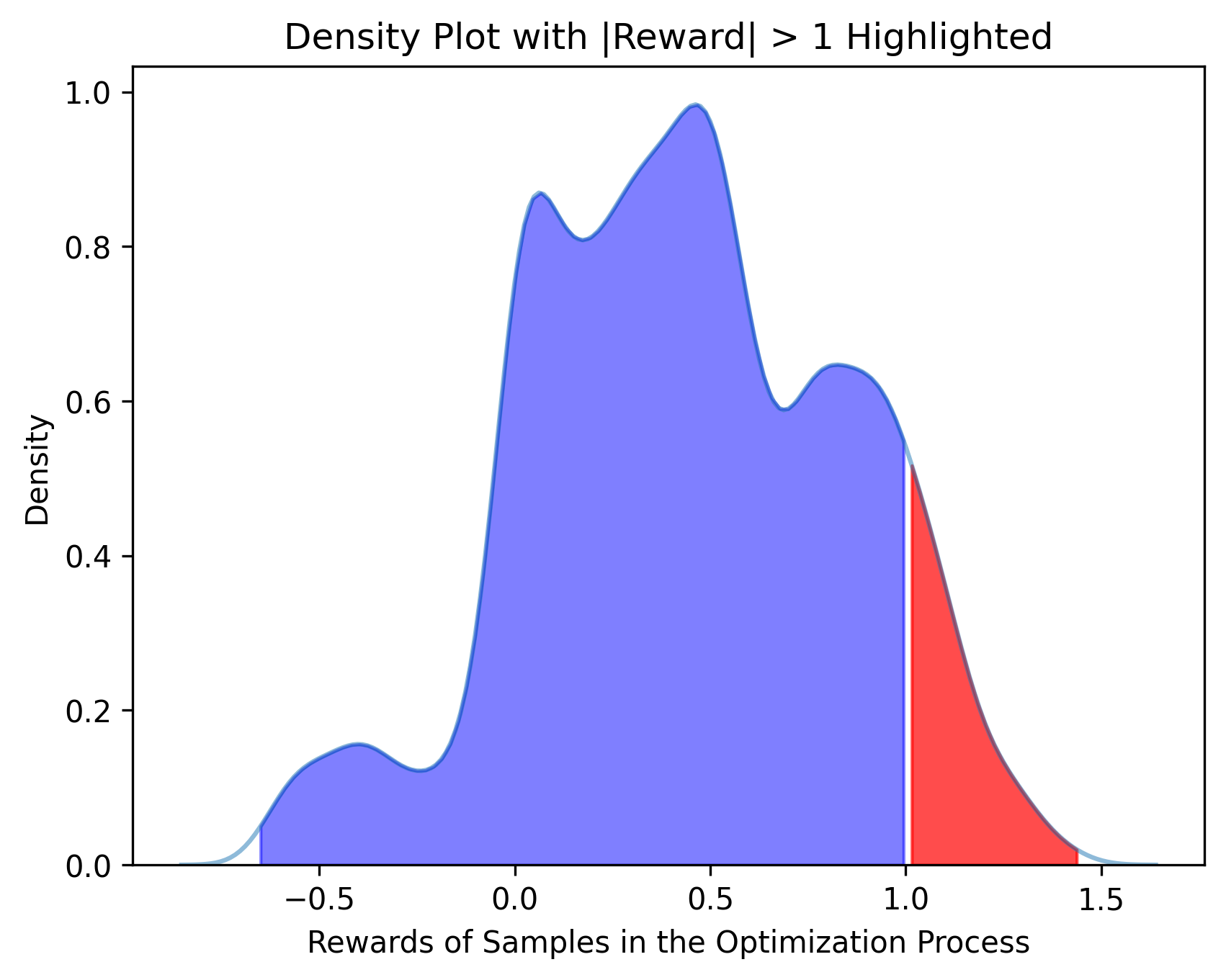}
    \caption{The distribution of sample implicit reward during optimization process before convergence (MBPP).}
    \label{figmbpp_proportion}
\end{figure}

\subsubsection{Detailed Cost Data}

\begin{table}[H]
\scriptsize
\renewcommand\tabcolsep{3.2pt}
\renewcommand\arraystretch{1.2}
\small
\setlength{\abovecaptionskip}{0.1cm}
\setlength{\belowcaptionskip}{-0.2cm}
\centering
\caption{The detailed cost value ($\$$) in Figure~\ref{figcostoptimization} (optimization process on test data).}
\label{tab:cost_analysisoptimization}
\scalebox{1.0}{%
\begin{tabular}{l|c|c|c}
\specialrule{1.2pt}{0pt}{0pt} 
\textbf{Method} 
& \textbf{GPT-4o-mini} 
& \textbf{DeepSeek-V3} 
& \textbf{DeepSeek-coder} \\
\hline
\hline
Ours    
& 2.2570 
& 1.4124 
& 1.3966 \\

Aflow    
& 4.6081 
& 2.9160 
& 2.8664 \\
\specialrule{1.2pt}{0pt}{0pt} 
\end{tabular}%
}
\vspace{-3mm}
\end{table}

\begin{table}[H]
\scriptsize
\renewcommand\tabcolsep{3.2pt}
\renewcommand\arraystretch{1.2}
\small
\setlength{\abovecaptionskip}{0.1cm}
\setlength{\belowcaptionskip}{-0.2cm}
\centering
\caption{The detailed cost value ($\$$) in Figure~\ref{figcostinference} (inference process on test data).}
\label{tab:cost_analysisinference}
\scalebox{1.0}{%
\begin{tabular}{l|c|c|c|c}
\specialrule{1.2pt}{0pt}{0pt} 
\textbf{Method} 
& \textbf{GPT-4o-mini} 
& \textbf{GPT-4o} 
& \textbf{DeepSeek-V3} 
& \textbf{DeepSeek-coder} \\
\hline
\hline
Ours    
& 0.2281 
& 5.1549 
& 0.1336 
& 0.1246 \\

Aflow    
& 0.2021 
& 3.9549 
& 0.1229 
& 0.1253 \\

IO    
& 0.0483 
& 1.2281 
& 0.0251 
& 0.0301 \\

CoT    
& 0.0536 
& 1.9688 
& 0.0300 
& 0.0473 \\

CoT SC    
& 0.3155 
& 7.3738 
& 0.1825 
& 0.1817 \\

MP    
& 0.3497 
& 9.5230 
& 0.2265 
& 0.2392 \\

MPD    
& 0.3789 
& 10.8530 
& 0.2425 
& 0.2276 \\

SR    
& 0.1243 
& 1.8651 
& 0.0728 
& 0.0699 \\
\specialrule{1.2pt}{0pt}{0pt} 
\end{tabular}%
}
\vspace{-3mm}
\end{table}

\end{document}